\def\arxiv{1}
\newcommand\myexp{\mathop{{}\mathbb{E}}}
\DeclareMathOperator{\range}{range}
\DeclareMathOperator*{\argmin}{arg\,min}
\newcommand{\costmp}{\mathcal Q_{\mathit{MP}}} 
\newcommand{\costuni}{\mathcal Q_{\Loss}} 
\newcommand{\R}{\mathbb{R}}             
\newcommand{\loc}{\mathit{loc}}         
\newcommand{\loss}{\ell}                   
\newcommand{\Loss}{\mathcal{L}}                   
\newcommand{\intset}[1]{\llbracket #1\rrbracket}
\newcommand{\ttheta}{\widetilde{\Theta}}
\newcommand{\Nei}[1]{\mathcal{N}_{#1}}
\newtheorem{theorem}{Theorem} 
\newtheorem{lemma}{Lemma}
\newtheorem{proposition}{Proposition}
\newtheorem{remark}{Remark}
\title{Personalized and Private Peer-to-Peer Machine Learning}
\author[1]{Aurélien Bellet\thanks{\texttt{first.last@inria.fr}}}
\author[2]{Rachid Guerraoui\thanks{\texttt{first.last@epfl.ch}}}
\author[2]{Mahsa Taziki$^\dagger$}
\author[3]{Marc Tommasi$^\star$}
\affil[1]{INRIA}
\affil[2]{EPFL}
\affil[3]{Université de Lille}
\date{}
\begin{document}

\maketitle


\begin{abstract}
The rise of connected personal devices together with privacy concerns call for machine learning algorithms capable of leveraging the data of a large number of agents to learn personalized models under strong privacy requirements. In this paper, we introduce an efficient algorithm to address the above problem in a fully decentralized (peer-to-peer) and asynchronous fashion, with provable convergence rate. We show how to make the algorithm differentially private to protect against the disclosure of information about the personal datasets, and formally analyze the trade-off between utility and privacy. Our experiments show that our approach dramatically outperforms previous work in the non-private case, and that under privacy constraints, we can significantly improve over models learned in isolation.
\end{abstract}


\section{Introduction}

Connected personal devices are now widespread: they can collect and process increasingly large and sensitive user data. As a concrete example, consider the health domain. Smart watches can record cardiac activities, mobile apps encourage users to participate to studies (about depression, concussion, etc.),\footnote{See e.g., \url{https://www.apple.com/researchkit/}} and recent painless sensors can replace a finger prick for blood glucose testing~\citep{electronics6030065}.
Such information can be leveraged through machine learning to provide useful personalized services (e.g., personalized treatments) to the user/patient. A common practice is to centralize data from all devices on an external server for batch processing, sometimes without explicit consent from users and with little oversight. While this data concentration is ideal for the utility of the learning process, it raises serious privacy concerns and opens the door to potential misuse (e.g., exploitation for the purpose of recruitment, insurance pricing or granting loans).
Therefore, in applications where the data is considered too sensitive to be shared (due to legislation or because the user opts out), one has to learn on each device separately without taking advantage of the multiplicity of data sources (e.g., information from similar users). This preserves privacy but leads to poor accuracy, in particular for new or moderately active users who have not collected much data.

Instead of the above two extreme approaches, our goal is to design a solution allowing a large number of users (agents) to collaborate so as to learn more accurate personalized models while ensuring that their data stay on their local device and that the algorithm does not leak sensitive information to others. We consider a \emph{fully decentralized} solution where agents operate asynchronously and communicate over a network in a peer-to-peer fashion, without any central entity to maintain a global state of the system or even to coordinate the protocol. The network  acts as a communication network but also models similarities between users.
While a decentralized architecture may be the only available option in some applications (e.g., IoT), it also provides interesting benefits when a more traditional distributed (master/slave) architecture could be used. In particular, peer-to-peer algorithms provide scalability-by-design to large sets of devices thanks to the locality of their updates \citep{Kermarrec2015a}. For instance, it was recently shown that fully decentralized learning algorithms can perform better than their distributed counterparts because they avoid a communication bottleneck at the master node \citep{Lian2017b}. Finally, a decentralized architecture intrinsically provides some security guarantees as it becomes much more difficult for any party (or any external adversary) to observe the full state of the system.

The problem of decentralized collaborative learning of personal models has been recently considered by \citet{Vanhaesebrouck2017a}, but they did not consider any privacy constraints.
In fact, while there has been a large body of work on privacy-preserving machine learning from centralized data, notably based on differential privacy \citep[see][and references therein]{Dwork2014a,regularized,Bassily2014a}, the case where sensitive datasets are distributed across multiple data owners has been much less studied, let alone the fully decentralized setting. Existing approaches for privacy-preserving distributed learning \citep[see e.g.,][]{Pathak2010a,Rajkumar2012a,Shokri2015a,Huang2015a} rely on a central (sometimes trusted) server, assume the local data distribution is the same for all users and/or are designed to learn a single global model rather than a personal model for each user.

In this paper, we ask a challenging question: given the above decentralization and privacy constraints, can agents improve upon their purely local models through collaboration?
Our contributions towards a positive answer to this question are three-fold.
First, we propose a decentralized and asynchronous block coordinate descent algorithm to address the problem in the non-private setting. Taking advantage of the structure of the problem, this algorithm has simple updates and provable convergence rates, improving upon the previous work of \citet{Vanhaesebrouck2017a}.
Second, we design a differentially-private scheme based on randomly perturbing each update of our algorithm. This scheme guarantees that the messages sent by the users over the network during the execution of the algorithm do not reveal significant information about any data point of any local dataset. We provide a formal analysis of the utility loss due to privacy.
Third, we conduct experiments on synthetic and real-world data to validate our approach. The empirical results show that the trade-off between utility and privacy is in line with our theoretical findings, and that under strong privacy constraints we can still outperform the purely local models in terms of accuracy.

The rest of the paper is organized as follows. 
In Section~\ref{sec:cd_algo}, we describe the problem setting and presents our decentralized algorithm for the non-private case. Section~\ref{sec:privacy} introduces a differentially private version and analyzes the trade-off between utility and privacy. In
Section~\ref{sec:related}, we discuss some related work on decentralized and private learning.
Finally, Section~\ref{sec:exp} is dedicated to numerical experiments. Detailed proofs can be found in the supplementary material.

\section{Peer-to-Peer Personalized  Learning with Coordinate Descent}
\label{sec:cd_algo}

We start by formally describing the learning problem that we address in this paper.

\subsection{Problem Setting}

We consider a set of $n$ agents. Each agent $i$ has a local data distribution $\mu_i$ over the space $\mathcal{X}\times\mathcal{Y}$ and has access to a set $\mathcal{S}_i=\{(x_i^j,y_i^j)\}_{j=1}^{m_i}$ of $m_i\geq 0$ training examples drawn i.i.d. from $\mu_i$. The goal of agent $i$ is to learn a model $\theta\in\R^p$ with small expected loss $\myexp_{(x_i,y_i)\sim\mu_i}[\loss(\theta;x_i,y_i)]$, where the loss function $\loss(\theta;x_i,y_i)$ is convex in $\theta$ and measures the performance of $\theta$ on data point $(x_i,y_i)$. In the setting where agent $i$ must learn on its own, a standard approach is to select the model minimizing the local (potentially regularized) empirical loss:
\begin{equation}
\label{eq:solitary}
\theta^{\loc}_i \in \argmin_{\theta\in\R^p} \Big[\underbrace{\frac{1}{m_i}\sum_{j=1}^{m_i} \loss(\theta; x_i^j,y_i^j) + \lambda_i\|\theta\|^2}_{\Loss_i(\theta; \mathcal{S}_i)}\Big],
\end{equation}
with $\lambda_i\geq 0$.
In this paper, agents do not learn in isolation but rather participate in a decentralized peer-to-peer network over which they can exchange information. Such collaboration gives them the opportunity to learn a better model than \eqref{eq:solitary}, for instance by allowing some agents to compensate for their lack of data.
Formally, let $\intset{n}=\{1,\dots,n\}$ and $G=(\intset{n},E,W)$ be a weighted connected graph over the set of agents where $E\in\intset{n}\times\intset{n}$ is the set of edges and $W\in\R^{n\times n}$ is a nonnegative weight matrix. $W_{ij}$ gives the weight of edge $(i,j)\in E$ with the convention that  $W_{ij}=0$ if $(i,j)\notin E$ or $i=j$. Following previous work \citep[see e.g.,][]{Evgeniou2004a,Vanhaesebrouck2017a}, we assume that the edge weights reflect a notion of ``task relatedness'': the weight $W_{ij}$ between agents $i$ and $j$ tends to be large if the models minimizing their respective expected loss are similar. These pairwise similarity weights may be derived from user profiles (e.g., in the health domain: weight, size, diabetes type, etc.) or directly from the local datasets, and can be computed in a private way \citep[see e.g.,][]{Goethals2004a,alaggan:hal-00646831}.

In order to scale to large networks, our goal is to design \emph{fully decentralized algorithms}: each agent $i$ only communicates with its neighborhood $\Nei{i}=\{j : W_{ij}>0\}$ without global knowledge of the network, and operates without synchronizing with other agents.
Overall, the problem can thus be seen as a multi-task learning problem over a large number of tasks (agents) with imbalanced training sets, and which must be solved in a fully decentralized way.


\subsection{Objective Function}
\label{sec:obj}

Our goal is to jointly learn the models of the agents by leveraging both their local datasets and the similarity information embedded in the network graph. Following a well-established principle in the multi-task learning literature \citep{Evgeniou2004a,Maurer2006a,Dhillon2011b}, we use graph regularization to favor models that vary smoothly on the graph. Specifically, representing the set of all models $\Theta_i\in\mathbb{R}^p$  as a stacked vector $\Theta = [\Theta_1;\dots;\Theta_n]\in\mathbb{R}^{np}$, the objective function we wish to minimize is given by
\begin{equation}
    \costuni(\Theta) = \frac{1}{2} \sum_{i<j}^n W_{ij} {\lVert \Theta_i - \Theta_j \rVert}^2 + \mu \sum_{i=1}^n D_{ii}c_i \Loss_i(\Theta_i; \mathcal{S}_i),
    \label{eq:Qunified}
\end{equation}
where $\mu>0$ is a trade-off parameter, $D_{ii}=\sum_{j=1}^nW_{ij}$ is a normalization factor and $c_i\in(0,1]\propto m_i$ is the ``confidence'' of agent $i$.\footnote{In practice we will set $c_i=m_i/\max_j m_j$ (plus some small constant when $m_i=0$).}
Minimizing~\eqref{eq:Qunified} implements a trade-off between having similar models for strongly connected agents and models that are accurate on their respective local datasets (the higher the confidence of an agent, the more importance given to the latter part). This allows agents to leverage relevant information from their neighbors --- it is particularly salient for agents with less data which can gain useful knowledge from better-endowed neighbors without ``polluting'' others with their own inaccurate model.
Note that the objective \eqref{eq:Qunified} includes the two extreme cases of learning purely local models as in \eqref{eq:solitary} (when $\mu\rightarrow\infty$) and learning a single global model (for $\mu\rightarrow 0$).

We now discuss a few assumptions and properties of $\costuni$. We assume that for any $i\in\intset{n}$, the local loss function $\Loss_i$ of agent $i$ is convex in its first argument with $L_i^{loc}$-Lipschitz continuous gradient. This implies that $\costuni$ is convex in $\Theta$.\footnote{This follows from the fact that the first term in \eqref{eq:Qunified} is a Laplacian quadratic form, hence convex in $\Theta$.} If we further assume that each $\Loss_i$ is $\sigma_i^{loc}$-strongly convex with $\sigma_i^{loc}>0$ (this is the case for instance when the local loss is L2-regularized), then $\costuni$ is $\sigma$-strongly convex with $\sigma \geq \mu\min_{1\leq i\leq n}[D_{ii}c_i\sigma_i^{loc}] > 0$. In other words, for any $\Theta,\Theta'\in\mathbb{R}^{np}$ we have $\costuni(\Theta') \geq \costuni(\Theta) + \nabla\costuni(\Theta)^T(\Theta'-\Theta) + \frac{\sigma}{2}\|\Theta'-\Theta\|_2^2.$
The partial derivative of $\costuni(\Theta)$ w.r.t. the variables in $\Theta_i$ is given by
\begin{equation}
\label{eq:partialgrad}
\textstyle[\nabla\costuni(\Theta)]_i = D_{ii}(\Theta_i + \mu c_i \nabla\Loss_i(\Theta_i; \mathcal{S}_i)) - \sum_{j\in\Nei{i}}W_{ij}\Theta_j.
\end{equation}
 We define the matrices $U_i\in\R^{np\times p}$, $i\in\intset{n}$, such that $(U_1,\dots,U_n) = I_{np}$. 
 For $i\in[n]$, the $i$-th \emph{block Lipschitz constant} $L_i$ of $\nabla\costuni(\Theta)$ satisfies $\|[\nabla\costuni(\Theta+U_id)]_i - [\nabla\costuni(\Theta)]_i\| \leq L_i\|d\|$ for any $\Theta\in\mathbb{R}^{np}$ and $d\in\R^p$. It is easy to see that $L_i = D_{ii}(1+\mu c_i L_i^{loc})$. We denote $L_{min} = \min_i L_i$ and $L_{max} = \max_i L_i$.



\subsection{Non-Private Decentralized Algorithm}
\label{sec:alg}

For ease of presentation, we first present a non-private decentralized algorithm. Note that this is interesting in its own right as the proposed solution improves upon the algorithm previously proposed by \citet{Vanhaesebrouck2017a}, see Section~\ref{sec:related} for a discussion.

\textbf{Time and communication models.}
Our goal is to minimize the objective function \eqref{eq:Qunified} in a fully decentralized manner.
Specifically, we operate in the asynchronous time model \citep{boyd2006randomized}: each agent has a \emph{local} clock ticking at the times of a rate 1 Poisson process, and wakes up when it ticks. This is in contrast to the synchronous model where agents wake up jointly according to a global clock (and thus need to wait for everyone to finish each round). As local clocks are i.i.d., we can equivalently consider a single clock which ticks when one of the local clocks ticks. This provides a more convenient way to state and analyze the algorithms in terms of a global clock counter $t$ (which is unknown to the agents).
For communication, we rely on a broadcast-based model \citep{Aysal2009a,Nedic2011a} where agents send messages to all their neighbors at once (without expecting a reply).
This model is very appealing in wireless distributed systems, as sending a message to all neighbors has the same cost as sending to a single neighbor. 

\textbf{Algorithm.} 
We propose a decentralized coordinate descent algorithm to minimize \eqref{eq:Qunified}. We initialize the algorithm with an arbitrary set of local models $\Theta(0)=[\Theta_i(0);\dots;\Theta_n(0)]$. At time step $t$, an agent $i$ wakes up. Two consecutive actions are then performed by $i$:
\begin{itemize}
\item \emph{Update step}: agent $i$ updates its local model based on the most recent information $\Theta_j(t)$ received from its neighbors $j\in\Nei{i}$:
\if\arxiv1
\begin{align}
\label{eq:cdupdate}
\Theta_i(t+1) &= \Theta_i(t) - (1/L_i)[\nabla\costuni(\Theta(t))]_i\\
&= ( 1-\alpha ) \Theta_i(t) + \alpha \big( \textstyle\sum_{j\in\Nei{i}}\frac{W_{ij}}{D_{ii}}\Theta_j(t) - \mu c_i\nabla\Loss_i(\Theta_i(t); \mathcal{S}_i) \big),\nonumber
\end{align}
\else
\begin{align}
\label{eq:cdupdate}
\Theta_i(t+1) &= \Theta_i(t) - (1/L_i)[\nabla\costuni(\Theta(t))]_i\\
&= ( 1-\alpha ) \Theta_i(t) + \alpha \big( \textstyle\sum_{j\in\Nei{i}}\frac{W_{ij}}{D_{ii}}\Theta_j(t)\nonumber\\
&- \mu c_i\nabla\Loss_i(\Theta_i(t); \mathcal{S}_i) \big),\nonumber
\end{align}
\fi
where $\alpha = 1/(1+\mu c_iL_i^{loc})\in(0,1]$.
\item \emph{Broadcast step}: agent $i$ sends its updated model $\Theta_i(t+1)$ to its neighborhood $\Nei{i}$.
\end{itemize}

The update step \eqref{eq:cdupdate} consists in a block coordinate descent update with respect to $\Theta_i$ and only requires agent $i$ to know the models $\Theta_j(t)$ previously broadcast by its neighbors $j\in\Nei{i}$. Note that the agent does not need to know the global iteration counter $t$, hence no global clock is needed. The algorithm is thus fully decentralized and asynchronous.
Interestingly, notice that this block coordinate descent update is adaptive to the confidence level of each agent in two respects: (i) globally, the more confidence, the more importance given to the gradient of the local loss compared to the neighbors' models, and (ii) locally, when $\Theta_i(t)$ is close to a minimizer of the local loss $\mathcal{L}_i$ (which is the case for instance if we initialize $\Theta_i(0)$ to such a minimizer), agents with low confidence will trust their neighbors' models more aggressively than agents with high confidence (which will make more conservative updates).\footnote{This second property is in contrast to a (centralized) gradient descent approach which would use the same constant, more conservative step size (equal to the standard Lipschitz constant of $\costuni$) for all agents.}
This is in line with the intuition that agents with low confidence should diverge more quickly from their local minimizer than those with high confidence.

\textbf{Convergence analysis.}
Under our assumption that the local clocks of the agents are i.i.d., the above algorithm can be seen as a randomized block coordinate descent algorithm \citep{Wright2015a}. It enjoys a fast linear convergence rate when $\costuni$ is strongly convex, as shown in the following result.

\begin{proposition}[Convergence rate]
\label{prop:conv}
For $T>0$, let $(\Theta(t))_{t=1}^T$ be the sequence of iterates generated by the proposed algorithm running for $T$ iterations from an initial point $\Theta(0)\in\R^{np}$. Let $\costuni^\star \in \min_{\Theta \in\mathbb{R}^{np}}\costuni(\Theta)$. When $\costuni$ is $\sigma$-strongly convex, we have:
$$\mathbb{E}\left[\costuni(\Theta(T)) - \costuni^\star\right] \leq \Big( 1 - \frac{\sigma}{nL_{max}} \Big)^T \left( \costuni(\Theta(0)) - \costuni^* \right).$$
\end{proposition}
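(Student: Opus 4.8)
The plan is to recognize the algorithm as an instance of randomized block coordinate descent with block-wise optimal step sizes, and to run the standard descent-lemma argument adapted to the randomized block selection. First I would fix an iteration $t$, condition on $\Theta(t)$, and suppose block $i$ is selected (which happens with probability $1/n$ since the $n$ local Poisson clocks are i.i.d.). Because the update $\Theta_i(t+1) = \Theta_i(t) - (1/L_i)[\nabla\costuni(\Theta(t))]_i$ changes only the $i$-th block and $L_i$ is the $i$-th block Lipschitz constant of $\nabla\costuni$, the block descent lemma gives
\begin{equation*}
\costuni(\Theta(t+1)) \leq \costuni(\Theta(t)) - \frac{1}{2L_i}\big\|[\nabla\costuni(\Theta(t))]_i\big\|^2 .
\end{equation*}
Taking expectation over the choice of block $i$ (uniform on $\intset{n}$) and using $L_i \leq L_{max}$ for every $i$, I would obtain
\begin{equation*}
\mathbb{E}\big[\costuni(\Theta(t+1)) \,\big|\, \Theta(t)\big] \leq \costuni(\Theta(t)) - \frac{1}{2nL_{max}}\big\|\nabla\costuni(\Theta(t))\big\|^2 .
\end{equation*}

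Next I would invoke $\sigma$-strong convexity of $\costuni$ in the form of the Polyak–Łojasiewicz inequality, $\|\nabla\costuni(\Theta)\|^2 \geq 2\sigma\,(\costuni(\Theta) - \costuni^\star)$, which follows by minimizing both sides of the strong-convexity lower bound stated in the excerpt over $\Theta'$. Substituting this into the previous display and subtracting $\costuni^\star$ yields the one-step contraction
\begin{equation*}
\mathbb{E}\big[\costuni(\Theta(t+1)) - \costuni^\star \,\big|\, \Theta(t)\big] \leq \Big(1 - \frac{\sigma}{nL_{max}}\Big)\big(\costuni(\Theta(t)) - \costuni^\star\big).
\end{equation*}
Then I would take total expectation, apply the tower rule, and unroll the recursion over $t = 0, 1, \dots, T-1$ to get the claimed bound. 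I would also note in passing that $\sigma \leq nL_{max}$ so the contraction factor lies in $[0,1)$.

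The only real subtlety — and the step I expect to need the most care — is justifying the block descent lemma with constant $L_i$ rather than the global Lipschitz constant: one must check that restricting $\nabla\costuni$ to the $i$-th block coordinates and using the definition $\|[\nabla\costuni(\Theta+U_id)]_i - [\nabla\costuni(\Theta)]_i\| \leq L_i\|d\|$ from Section~\ref{sec:obj} indeed gives $\costuni(\Theta+U_id) \leq \costuni(\Theta) + [\nabla\costuni(\Theta)]_i^\top d + \frac{L_i}{2}\|d\|^2$, and that plugging in the minimizing $d = -(1/L_i)[\nabla\costuni(\Theta)]_i$ (which is exactly the update rule) produces the $\frac{1}{2L_i}\|[\nabla\costuni(\Theta)]_i\|^2$ decrease. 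This is the standard one-dimensional integration argument along the segment $\Theta + sU_id$, $s\in[0,1]$, and is routine once the block Lipschitz property is in hand; everything else is bookkeeping with expectations.
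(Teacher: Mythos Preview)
Your proposal is correct and follows essentially the same approach as the paper: the paper obtains Proposition~\ref{prop:conv} as the zero-noise special case of Theorem~\ref{thm:utility}, whose proof proceeds exactly via the block descent lemma (their Lemma~\ref{lem:lip1}), expectation over the uniformly random block, the Polyak--\L{}ojasiewicz inequality derived from strong convexity, and unrolling the resulting contraction --- precisely the steps you outline.
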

\begin{proof}
This follows from a slight adaptation of the proof of \citet{Wright2015a} (Theorem~1 therein) to the block coordinate descent case. Note that the result can also be obtained as a special case of our Theorem~\ref{thm:utility} (later introduced in Section~\ref{sec:dpscheme}) by setting the noise scale $s_i(t)=0$ for all $t,i$.
\end{proof}
\begin{remark}
For general convex $\costuni$, an $O(1/t)$ rate can be obtained, see \citet{Wright2015a} for details.
\end{remark}

Proposition~\ref{prop:conv}
shows that each iteration shrinks the suboptimality gap by a constant factor. While this factor degrades linearly with the number of agents $n$, this is compensated by the fact that the number of iterations done in parallel also scales roughly linearly with $n$ (because agents operate asynchronously and in parallel). We thus expect the algorithm to scale gracefully with the size of the network if the number of updates per agent remains constant.
The value $\frac{\sigma}{L_{max}} \geq \frac{\mu\min_{1\leq i\leq n}[D_{ii}c_i\sigma_i^{loc}]}{\max_{1\leq i\leq n}[D_{ii}(1+\mu c_iL_i^{loc})]} > 0$ is the ratio between the lower and upper bound on the curvature of $\costuni$. Focusing on the relative differences between agents and assuming constant $\sigma_i^{loc}$'s and $L_i^{loc}$'s, it indicates that the algorithm converges faster when the degree-weighted confidence of agents is approximately the same. On the other hand, two types of agents can represent a bottleneck for the convergence rate: (i) a high-confidence and high-degree agent (the overall progress is then very dependent on the updates of that particular agent), and (ii) a low-confidence, poorly connected agent (hence converging slowly).

\section{Differentially Private Algorithm}
\label{sec:privacy}

As described above, the algorithm introduced in the previous section has many interesting properties. 
However, while there is no direct exchange of data between agents, the sequence of iterates broadcast by an agent may reveal information about its private dataset through the gradient of the local loss.
In this section, we define our privacy model and introduce an appropriate scheme to make our algorithm differentially private. We study its utility loss and the trade-off between utility and privacy. 

\subsection{Privacy Model}
At a high level, our goal is to prevent eavesdropping attacks. We assume the existence of an adversary who observes all the information sent over the network during the execution of the algorithm, but cannot access the agents' internal memory. We want to ensure that such an adversary cannot learn much information about any individual data point of any agent's dataset. This is a very strong notion of privacy: each agent does not trust any other agent or any third-party to process its data, hence the privacy-preserving mechanism must be implemented at the agent level. Furthermore, note that our privacy model protects any agent against all other agents even if they collude (i.e., share the information they receive).\footnote{We assume a honest-but-curious model for the agents: they want to learn as much as possible from the information that they receive but they truthfully follow the protocol.}

To formally define this privacy model, we rely on the notion of Differential Privacy (DP) \citep{Dwork2006a}, which has emerged as a powerful measure of how much information about any individual entry of a dataset is contained in the output of an algorithm.
Formally, let $\mathcal{M}$ be a randomized mechanism taking a dataset as input, and let $\epsilon>0,\delta\geq 0$. We say that $\mathcal{M}$ is $(\epsilon,\delta)$-differentially private if for all datasets $\mathcal{S}=\{z_1,\dots,z_i,\dots,z_m\},\mathcal{S}'=\{z_1,\dots,z'_i,\dots,z_m\}$ differing in a single data point and for all sets of possible outputs $\mathcal{O}\subseteq\range(\mathcal{M})$, we have:
\begin{equation}
\label{eq:dp}
Pr(\mathcal{M} ( \mathcal{S}) \in \mathcal{O}) \leq e^{\epsilon}Pr(\mathcal{M} ( \mathcal{S}') \in \mathcal{O}) + \delta,
\end{equation}
where the probability is over the randomness of the mechanism. At a high level, one can see \eqref{eq:dp} as ensuring that $\mathcal{M} ( \mathcal{S})$ does not leak much information about any individual data point contained in $\mathcal{S}$.
DP has many attractive properties: in particular it provides strong robustness against background knowledge attacks and does not rely on computational assumptions. The composition of several DP mechanisms remains DP, albeit a graceful degradation in the parameters  \citep[see][for strong composition results]{boosting,composition}. We refer to \citet{Dwork2014a} for more details on DP.

In our setting, following the notations of \eqref{eq:dp}, each agent $i$ runs a mechanism $\mathcal{M}_{i}(\mathcal{S}_i)$ which takes its local dataset $\mathcal{S}_i$ and outputs all the information sent by $i$ over the network during the execution of the algorithm (i.e., the sequence of iterates broadcast by the agent). Our goal is to make $\mathcal{M}_{i}(\mathcal{S}_i)$ $(\epsilon,\delta)$-DP for all agents $i$ simultaneously. Note that learning purely local models \eqref{eq:solitary} is a perfectly private baseline according to the above definition as agents do not exchange any information. Below, we present a way to collaboratively learn better models while preserving privacy.

\subsection{Privacy-Preserving Scheme}
\label{sec:dpscheme}

The privacy-preserving version of our algorithm consists in replacing the update step in \eqref{eq:cdupdate} by the following one (assuming that at time $t$ agent $i$ wakes up):
\if\arxiv1
\begin{equation}
\label{eq:cdupdatepriv}
\widetilde{\Theta}_i(t+1) = ( 1-\alpha ) \widetilde{\Theta}_i(t) + \alpha \big( \textstyle\sum_{j\in\Nei{i}}\frac{W_{ij}}{D_{ii}}\widetilde{\Theta}_j(t) - \mu c_i(\nabla\Loss_i(\widetilde{\Theta}_i(t); \mathcal{S}_i)+ \eta_i(t)) \big),
\end{equation}
\else
\begin{align}
\label{eq:cdupdatepriv}
\begin{aligned}
\widetilde{\Theta}_i(t+1) &=& ( 1-\alpha ) \widetilde{\Theta}_i(t) + \alpha \big( \textstyle\sum_{j\in\Nei{i}}\frac{W_{ij}}{D_{ii}}\widetilde{\Theta}_j(t) \\
&&- \mu c_i(\nabla\Loss_i(\widetilde{\Theta}_i(t); \mathcal{S}_i)+ \eta_i(t)) \big),
\end{aligned}
\end{align}
\fi
where $\eta_i(t)\sim Laplace(0,s_i(t))^p\in\R^p$ is a noise vector drawn from a Laplace distribution with finite scale $s_i(t)\geq0$.\footnote{We use the convention $Laplace(0,0)=0$ w.p. $1$.} The difference with the non-private update is that agent $i$ adds appropriately scaled Laplace noise to the gradient of its local loss $\Loss_i$. It then sends the resulting noisy iterate $\widetilde{\Theta}_i(t+1)$, instead of $\Theta_i(t+1)$, to its neighbors.
Note that for full generality, we allow the noise to potentially depend on the global iteration number $t$, as we will see towards the end of this section that it opens interesting perspectives.


Assume that update \eqref{eq:cdupdatepriv} is run $T_i$ times by agent $i$ within the total $T>0$ iterations across the network. Let $\mathcal{T}_i=\{t_{i}^k\}_{k=1}^{T_i}$ be the set of iterations at which agent $i$ woke up and consider the mechanism $\mathcal{M}_{i}(\mathcal{S}_i) = \{\widetilde{\Theta}_i(t_i+1) : t_i\in\mathcal{T}_i\}$.
The following theorem shows how to scale the noise at each iteration, $s_i(t_i)$, so as to provide the desired overall differential privacy guarantees.
\begin{theorem}[Differential privacy of $\mathcal{M}_{i}$]
\label{thm:privacy}
Let $i\in\intset{n}$ and assume that $\Loss_i(\theta; \mathcal{S}_i) = \frac{1}{m_i}\sum_{k=1}^{m_i} \loss_i(\theta; x_i^k,y_i^k) + \lambda_i\|\theta\|^2$ where $\loss(\cdot; x,y)$ is $L_0$-Lipschitz with respect to the $L_1$-norm for all $(x,y)\in\mathcal{X}\times\mathcal{Y}$. For any $t_i\in\mathcal{T}_i$, let $s_i(t_i)=\frac{2 L_0}{\epsilon_i(t_i) m_i}$ for some $\epsilon_i(t_i)>0$. For any $\bar{\delta}_i\in[0,1]$ and initial point $\widetilde{\Theta}(0)\in\R^{np}$ independent of $\mathcal{S}_i$, the mechanism $\mathcal{M}_{i}(\mathcal{S}_i)$ is $(\bar{\epsilon}_i,\bar{\delta}_i)$-DP with
\if\arxiv1
\begin{multline*}
\textstyle\bar{\epsilon}_i = \min\Big\{ \sum_{t_i = 1}^{T_i} \epsilon_{i}(t_i), \sum_{t_i = 1}^{T_i}\frac{(e^{\epsilon_{i}(t_i)}-1)\epsilon_{i}(t_i)}{e^{\epsilon_{i}(t_i)}+1}\textstyle+\sqrt{\sum_{t_i = 1}^{T_i}2\epsilon_{i}(t_i)^2\log\big(e+\sqrt{\sum_{t_i = 1}^{T_i}\epsilon_{i}(t_i)^{2}}/\bar{\delta_{i}}\big)},\\
\textstyle \sum_{t_i = 1}^{T_i}{\frac{(e^{\epsilon_{i}(t_i)}-1)\epsilon_{i}(t_i)}{e^{\epsilon_{i}(t_i)}+1}+\sqrt{\sum_{t_i = 1}^{T_i}2\epsilon_{i}(t_i)^2\log(1/\bar{\delta_{i}})}}\Big\}.
\end{multline*}
\else
\begin{multline*}
\textstyle\bar{\epsilon}_i = \min\Big\{ \sum_{t_i = 1}^{T_i} \epsilon_{i}(t_i), \sum_{t_i = 1}^{T_i}\frac{(e^{\epsilon_{i}(t_i)}-1)\epsilon_{i}(t_i)}{e^{\epsilon_{i}(t_i)}+1}\\
\textstyle+\sqrt{\sum_{t_i = 1}^{T_i}2\epsilon_{i}(t_i)^2\log\big(e+\sqrt{\sum_{t_i = 1}^{T_i}\epsilon_{i}(t_i)^{2}}/\bar{\delta_{i}}\big)},\\
\textstyle \sum_{t_i = 1}^{T_i}{\frac{(e^{\epsilon_{i}(t_i)}-1)\epsilon_{i}(t_i)}{e^{\epsilon_{i}(t_i)}+1}+\sqrt{\sum_{t_i = 1}^{T_i}2\epsilon_{i}(t_i)^2\log(1/\bar{\delta_{i}})}}\Big\}.
\end{multline*}
\fi
\end{theorem}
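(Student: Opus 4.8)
The plan is to recognize $\mathcal{M}_i(\mathcal{S}_i)=\{\widetilde{\Theta}_i(t_i^k+1)\}_{k=1}^{T_i}$ as an adaptive composition of $T_i$ per-step Laplace mechanisms applied to the local gradient, to show that the $k$-th step is $\epsilon_i(t_i^k)$-DP, and then to invoke basic and advanced composition, taking the minimum of the resulting bounds. First I would condition on everything except agent $i$'s own noise vectors: the wake-up schedule (in particular $T_i$ and $\mathcal{T}_i$, which depend only on the Poisson clocks) and the noise $\{\eta_j(\cdot)\}_{j\neq i}$ of all other agents. All of this is independent of $\mathcal{S}_i$, so it suffices to establish the guarantee conditionally and then average. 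Under this conditioning the entire execution is a deterministic function of $\mathcal{S}_i$ and of $\eta_i(t_i^1),\dots,\eta_i(t_i^{T_i})$.

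Write $a_k:=\widetilde{\Theta}_i(t_i^k+1)$ for agent $i$'s $k$-th broadcast and $a_0:=\widetilde{\Theta}_i(0)$, which is data-independent by hypothesis. The key structural observation is that agent $i$ influences the rest of the network only through the messages $a_1,a_2,\dots$, so given the conditioning, every quantity agent $i$ reads at its $k$-th wake-up --- its own current model $\widetilde{\Theta}_i(t_i^k)=a_{k-1}$, and the neighbor models $\widetilde{\Theta}_j(t_i^k)$ for $j\in\Nei{i}$ --- is a deterministic function of the prefix $(a_1,\dots,a_{k-1})$ alone. Consequently $a_k=\Phi_k(a_1,\dots,a_{k-1})-\alpha\mu c_i\big(\nabla\Loss_i(a_{k-1};\mathcal{S}_i)+\eta_i(t_i^k)\big)$ for a deterministic map $\Phi_k$; thus, conditionally on the history $a_{<k}$, the only channel through which $\mathcal{S}_i$ enters $a_k$ is the term $\nabla\Loss_i(a_{k-1};\mathcal{S}_i)$, which is perturbed by fresh independent Laplace noise.

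Next I would bound the $L_1$-sensitivity. Since $\Loss_i(\theta;\mathcal{S}_i)=\frac{1}{m_i}\sum_{l=1}^{m_i}\loss_i(\theta;x_i^l,y_i^l)+\lambda_i\|\theta\|^2$, swapping one example leaves the regularizer untouched and changes $\nabla\Loss_i(\theta;\cdot)$ by $\frac{1}{m_i}\big(\nabla\loss_i(\theta;z)-\nabla\loss_i(\theta;z')\big)$, whose $L_1$-norm is at most $2L_0/m_i$ by the Lipschitz hypothesis on $\loss$. Hence, conditionally on $a_{<k}$, the map $\mathcal{S}_i\mapsto\nabla\Loss_i(a_{k-1};\mathcal{S}_i)+\eta_i(t_i^k)$ is precisely the Laplace mechanism with scale $s_i(t_i^k)=\tfrac{2L_0}{\epsilon_i(t_i^k)m_i}$ applied to a query of $L_1$-sensitivity $2L_0/m_i$, hence $\epsilon_i(t_i^k)$-DP; and $a_k$ is a post-processing of this quantity by a map that is measurable w.r.t.\ the history and never re-accesses $\mathcal{S}_i$, so $a_k\mid a_{<k}$ is $\epsilon_i(t_i^k)$-DP as well.

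Finally, the releases $a_1,\dots,a_{T_i}$ fit the adaptive composition template exactly: the $k$-th component takes $\mathcal{S}_i$ and the previous outputs and is $\epsilon_i(t_i^k)$-DP for every fixed value of those outputs. Basic composition gives the bound $\sum_{k=1}^{T_i}\epsilon_i(t_i^k)$ (with $\bar{\delta}_i=0$), while the strong composition theorems for heterogeneous $\epsilon$-DP mechanisms of \citet{boosting,composition} give the two remaining expressions (those involving $\sum_k\tfrac{(e^{\epsilon_i(t_i^k)}-1)\epsilon_i(t_i^k)}{e^{\epsilon_i(t_i^k)}+1}$ plus a $\sqrt{\cdot}$ term) for the prescribed $\bar{\delta}_i$; taking the minimum yields $(\bar{\epsilon}_i,\bar{\delta}_i)$-DP, and un-conditioning on the schedule and the other agents' noise preserves it since their laws do not depend on $\mathcal{S}_i$. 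I expect the main obstacle to be the information-flow argument behind the second paragraph: one must argue rigorously that, even though agent $i$'s iterates feed into its neighbors' own (noisy) iterates which then feed back, the finite prefix $a_1,\dots,a_{k-1}$ genuinely summarizes all of agent $i$'s past influence on the network, so that the sequence really is an adaptive composition in the technical sense; a secondary, more clerical point is to match the exact algebraic form of the advanced-composition bounds --- the $\tfrac{(e^{\epsilon}-1)\epsilon}{e^{\epsilon}+1}$ terms and the $\log(e+\sqrt{\sum_k\epsilon_i(t_i^k)^2}/\bar{\delta}_i)$ versus $\log(1/\bar{\delta}_i)$ variants --- to the statements in the cited references, and to handle the degenerate $s_i(t_i)=0$ case via the stated conventions.
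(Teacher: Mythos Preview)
Your proposal is correct and follows essentially the same route as the paper: bound the $L_1$-sensitivity of the local gradient by $2L_0/m_i$ via the Lipschitz assumption, conclude that each broadcast is $(\epsilon_i(t_i),0)$-DP by the Laplace mechanism, and then apply the composition results of \citet{composition}. The only cosmetic difference is that the paper computes the sensitivity of the full iterate $\Theta_i(t_i+1)$ and checks that the rescaled noise $\alpha\mu c_i\eta_i(t_i)$ has the right scale, whereas you apply the Laplace mechanism directly to $\nabla\Loss_i$ and invoke post-processing; your explicit conditioning on the schedule and on $\{\eta_j\}_{j\neq i}$ to justify the adaptive-composition template is in fact more careful than the paper's own treatment, which leaves that step implicit.
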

\begin{remark}
We can obtain a similar result if we assume $L_0$-Lipschitzness of $\ell$ w.r.t. $L_2$-norm (instead of $L_1$) and use Gaussian noise (instead of Laplace). Details are in the supplementary material.
\end{remark}
Theorem~\ref{thm:privacy} shows that $\mathcal{M}_{i}(\mathcal{S}_i)$ is $(\bar{\epsilon}_i,0)$-DP for $\bar{\epsilon}_i=\sum_{t_i = 1}^{T_i} \epsilon_{i}(t_i)$. One can also achieve a better scaling for $\bar{\epsilon}_i$ at the cost of setting $\bar{\delta}_i>0$ \citep[see][for a discussion of the trade-offs in the composition of DP mechanisms]{composition}.
The noise scale needed to guarantee DP for an agent $i$ is inversely proportional to the size $m_i$ of its local dataset $\mathcal{S}_i$. This is a classic property of DP, but it is especially appealing in our collaborative formulation as the confidence weights $c_i$'s tune down the importance of agents with small datasets (preventing their noisy information to spread) and give more importance to agents with larger datasets (who propagate useful information).
Our next result quantifies how the added noise affects the convergence.
\begin{theorem}[Utility loss]
\label{thm:utility}
For any $T>0$, let $(\widetilde{\Theta}(t))_{t=1}^T$ be the sequence of iterates generated by $T$ iterations of update \eqref{eq:cdupdatepriv} from an initial point $\widetilde{\Theta}(0)\in\R^{np}$.
For $\sigma$-strongly convex $\costuni$, we have:
\if\arxiv1
\begin{equation*}
\mathbb{E}\left[\costuni(\widetilde{\Theta}(T)) - \costuni^\star\right] \leq \Big( 1 - \frac{\sigma}{nL_{max}} \Big)^T \left( \costuni(\widetilde{\Theta}(0)) - \costuni^\star \right) + \frac{1}{nL_{min}}\sum_{t=0}^{T-1}\sum_{i=1}^n\Big( 1 - \frac{\sigma}{nL_{max}} \Big)^{t}\big(\mu D_{ii} c_i s_i(t)\big)^2.
\end{equation*}
\else
\begin{multline*}
\mathbb{E}\left[\costuni(\widetilde{\Theta}(T)) - \costuni^\star\right] \leq \\\Big( 1 - \frac{\sigma}{nL_{max}} \Big)^T \left( \costuni(\widetilde{\Theta}(0)) - \costuni^\star \right)\\
+ \frac{1}{nL_{min}}\sum_{t=0}^{T-1}\sum_{i=1}^n\Big( 1 - \frac{\sigma}{nL_{max}} \Big)^{t}\big(\mu D_{ii} c_i s_i(t)\big)^2.
\end{multline*}
\fi
\end{theorem}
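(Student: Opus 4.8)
The plan is to treat update~\eqref{eq:cdupdatepriv} as one step of a randomized block coordinate descent with an \emph{unbiased noisy gradient}, and then run the standard descent-lemma-plus-strong-convexity argument while carrying the extra variance term. First I would rewrite~\eqref{eq:cdupdatepriv} in the coordinate-descent form
$$\widetilde{\Theta}(t+1) = \widetilde{\Theta}(t) - \tfrac{1}{L_i}\,U_i\Big([\nabla\costuni(\widetilde{\Theta}(t))]_i + g_i(t)\Big), \qquad g_i(t) := \mu D_{ii} c_i\, \eta_i(t),$$
where $i$ is the block (agent) waking up at step $t$; this is exactly the manipulation that turned the first line of~\eqref{eq:cdupdate} into the second, using $\alpha = D_{ii}/L_i$, the formula~\eqref{eq:partialgrad} for $[\nabla\costuni]_i$, and $L_i = D_{ii}(1+\mu c_i L_i^{loc})$. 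The facts I would record here are that $g_i(t)$ is zero-mean and independent of both $\widetilde{\Theta}(t)$ and the block choice, and that, its coordinates being i.i.d.\ Laplace, the second moment $\myexp\|g_i(t)\|^2$ is proportional to $(\mu D_{ii} c_i s_i(t))^2$.

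Next I would invoke the block descent lemma coming from the $i$-th block Lipschitz constant: $\costuni(\Theta + U_i d) \le \costuni(\Theta) + \langle [\nabla\costuni(\Theta)]_i, d\rangle + \tfrac{L_i}{2}\|d\|^2$ for all $\Theta$ and $d\in\R^p$. Plugging in $d = -\tfrac{1}{L_i}([\nabla\costuni(\widetilde{\Theta}(t))]_i + g_i(t))$ and expanding the square, the cross terms cancel identically and one is left with
$$\costuni(\widetilde{\Theta}(t+1)) \le \costuni(\widetilde{\Theta}(t)) - \tfrac{1}{2L_i}\big\|[\nabla\costuni(\widetilde{\Theta}(t))]_i\big\|^2 + \tfrac{1}{2L_i}\|g_i(t)\|^2.$$
I would then take expectation over the uniformly random block $i\in\intset{n}$ and the fresh noise, conditionally on $\widetilde{\Theta}(t)$. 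In the (negative) progress term I would lower-bound $\tfrac{1}{2L_i} \ge \tfrac{1}{2L_{max}}$ so that $\sum_i \tfrac{1}{2L_i}\|[\nabla\costuni]_i\|^2 \ge \tfrac{1}{2L_{max}}\|\nabla\costuni\|^2$, and in the noise term I would upper-bound $\tfrac{1}{2L_i} \le \tfrac{1}{2L_{min}}$; together with the bound on $\myexp\|g_i(t)\|^2$ this gives a one-step inequality of the form
$$\myexp\!\big[\costuni(\widetilde{\Theta}(t+1)) \mid \widetilde{\Theta}(t)\big] \le \costuni(\widetilde{\Theta}(t)) - \tfrac{1}{2nL_{max}}\big\|\nabla\costuni(\widetilde{\Theta}(t))\big\|^2 + \tfrac{1}{nL_{min}}\sum_{i=1}^n\big(\mu D_{ii} c_i s_i(t)\big)^2.$$

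Finally I would use $\sigma$-strong convexity of $\costuni$ through its Polyak--{\L}ojasiewicz consequence $\|\nabla\costuni(\Theta)\|^2 \ge 2\sigma(\costuni(\Theta) - \costuni^\star)$ --- obtained by minimizing over $\Theta'$ the strong-convexity lower bound recalled in Section~\ref{sec:obj} --- to replace the gradient term by $-\tfrac{\sigma}{nL_{max}}(\costuni(\widetilde{\Theta}(t)) - \costuni^\star)$. Setting $a_t := \myexp[\costuni(\widetilde{\Theta}(t)) - \costuni^\star]$, $\rho := 1 - \tfrac{\sigma}{nL_{max}}$ and $b_t := \tfrac{1}{nL_{min}}\sum_i (\mu D_{ii} c_i s_i(t))^2$, taking total expectations yields the scalar recursion $a_{t+1} \le \rho\, a_t + b_t$; unrolling it over $t = 0, \dots, T-1$ and summing the resulting geometric series gives the stated bound. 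The argument is the randomized coordinate descent analysis of \citet{Wright2015a} adapted to inexact gradients; the points that require care are (i) getting the effective noise scale $\mu D_{ii} c_i$ and the step size $1/L_i$ right in the reformulation, (ii) the conditional-expectation bookkeeping over \emph{both} the random block and the independent noise, so that the cross term vanishes and only $\myexp\|g_i(t)\|^2$ survives, and (iii) using the $L_{max}$ bound on the progress term while using the $L_{min}$ bound on the variance term.
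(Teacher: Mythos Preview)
Your proposal is correct and follows essentially the same route as the paper's own proof: rewrite the noisy update as a block coordinate step with perturbed partial gradient $[\nabla\costuni]_i + \mu D_{ii}c_i\eta_i(t)$, apply the block descent lemma, observe that the cross terms cancel \emph{deterministically} for the step size $1/L_i$, bound the progress term with $L_{max}$ and the noise term with $L_{min}$, average over the uniform block choice, invoke the PL inequality from $\sigma$-strong convexity, and unroll the resulting scalar recursion. The only cosmetic difference is that the paper carries $\|\widetilde{\eta}(t)\|^2$ for the full stacked noise vector before computing its expectation at the very end, whereas you compute $\myexp\|g_i(t)\|^2$ per block along the way; also note that, since the cross terms cancel identically, point~(ii) in your list of ``care'' items is not actually needed for the cross term---the zero-mean property is only used to evaluate $\myexp\|g_i(t)\|^2$.
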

This result shows that the error of the private algorithm after $T$ iterations decomposes into two terms. The first term is the same as in the non-private setting and decreases with $T$. The second term gives an additive error due to the noise, which takes the form of a weighted sum of the variance of the noise added to the iterate at each iteration (note that we indeed recover the non-private convergence rate of Proposition~\ref{prop:conv} when the noise scale is $0$).
When the noise scale used by each agent is constant across iterations, this additive error converges to a finite number as $T\rightarrow\infty$. The number of iterations $T$ rules the trade-off between the two terms. We give more details in the supplementary material and study this numerically in Section~\ref{sec:exp}.

In practical scenarios, each agent $i$ has an overall privacy budget $(\bar{\epsilon}_i,\bar{\delta}_i)$. Assume that the agents agree on a value for $T$ (e.g., using Proposition~\ref{prop:conv} to achieve the desired precision). Each agent $i$ is thus expected to wake up $T_i=T/n$ times, and can use Theorem~\ref{thm:privacy} to appropriately distribute its privacy budget across the $T_i$ iterations and stop after $T_i$ updates. A simple and practical strategy is to distribute the budget equally across the $T_i$ iterations. Yet, Theorem~\ref{thm:utility} suggests that better utility can be achieved if the noise scale increases with time. Assume that agents know in advance the clock schedule for a particular run of the algorithm, i.e. agent $i$ knows the global iterations $\mathcal{T}_i$ at which it will wake up.
The following result gives the noise allocation policy minimizing the utility loss.






\begin{proposition}
\label{prop:allfullprivate}
Let $C=1 - \sigma/nL_{max}$ and for any agent $i\in\intset{n}$ define $\lambda_{\mathcal{T}_i}(i) = \sum_{t \in \mathcal{T}_i} \frac{\sqrt[3]{ C} - 1}{\sqrt[3]{C^T} - 1} \sqrt[3]{C^t}$. Assuming $s_i(t_i)=\frac{2 L_0}{\epsilon_i(t_i) m_i}$ for $t_i\in\mathcal{T}_i$ as in Theorem~\ref{thm:privacy}, the following privacy parameters optimize the utility loss while ensuring the budget $\bar{\epsilon}_i$ is matched exactly:
$$
\epsilon_i(t) = \left\{ \begin{array}{ll}
 & \frac{\sqrt[3]{ C} - 1}{\sqrt[3]{C^T} - 1} \sqrt[3]{C^{t}} \frac{\bar{\epsilon}_i}{\lambda_{\mathcal{T}_i}(i)} \text{ for } t \in \mathcal{T}_i, \\
 & 0 \text{ otherwise.}
\end{array} \right.
$$
\end{proposition}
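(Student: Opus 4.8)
The plan is to reduce the claim to a one-dimensional-per-coordinate convex program and solve it by Lagrange multipliers. The starting point is the utility bound of Theorem~\ref{thm:utility}. Its first term, $\big(1-\tfrac{\sigma}{nL_{max}}\big)^{T}\big(\costuni(\widetilde{\Theta}(0))-\costuni^\star\big)$, does not depend on the noise scales, so minimizing the bound over the noise allocation is equivalent to minimizing the additive term $\tfrac{1}{nL_{min}}\sum_{t=0}^{T-1}\sum_{i=1}^{n}C^{t}\big(\mu D_{ii}c_i s_i(t)\big)^2$, where $C=1-\sigma/nL_{max}$. Since agent $i$ adds no noise ($s_i(t)=0$) at any iteration $t\notin\mathcal{T}_i$, this double sum is \emph{separable} across agents: agent $i$ contributes $\tfrac{(\mu D_{ii}c_i)^2}{nL_{min}}\sum_{t\in\mathcal{T}_i}C^{t}s_i(t)^2$, involving only $i$'s own variables. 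Moreover the privacy budget constraint (see below) is also per-agent, so the joint minimization decouples and we may fix a single $i$ for the rest of the argument.

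Next I would substitute the privacy calibration $s_i(t)=\tfrac{2L_0}{\epsilon_i(t)m_i}$ from Theorem~\ref{thm:privacy}, which turns agent $i$'s objective into $\big(\tfrac{2L_0\mu D_{ii}c_i}{m_i}\big)^2\tfrac{1}{nL_{min}}\sum_{t\in\mathcal{T}_i}\tfrac{C^{t}}{\epsilon_i(t)^2}$; the positive multiplicative constant is irrelevant for the minimizer. For the constraint, recall that with $\bar\delta_i=0$ Theorem~\ref{thm:privacy} certifies that $\mathcal{M}_i(\mathcal{S}_i)$ is $(\bar\epsilon_i,0)$-DP with $\bar\epsilon_i=\sum_{t\in\mathcal{T}_i}\epsilon_i(t)$, so matching the budget exactly means imposing the affine constraint $\sum_{t\in\mathcal{T}_i}\epsilon_i(t)=\bar\epsilon_i$ with $\epsilon_i(t)>0$. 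The problem is therefore
\[
\min_{\epsilon_i(t)>0}\ \sum_{t\in\mathcal{T}_i}\frac{C^{t}}{\epsilon_i(t)^2}\qquad\text{s.t.}\qquad\sum_{t\in\mathcal{T}_i}\epsilon_i(t)=\bar\epsilon_i .
\]
Each summand $\epsilon\mapsto C^{t}\epsilon^{-2}$ is strictly convex and strictly decreasing on $(0,\infty)$ (here $C\in(0,1)$, so $C^{t}>0$), and the feasible set is a relatively open affine slice, so this is a convex program with a strictly feasible point and hence a unique minimizer characterized by stationarity. Forming the Lagrangian $\sum_{t\in\mathcal{T}_i}C^{t}\epsilon_i(t)^{-2}+\nu\big(\sum_{t\in\mathcal{T}_i}\epsilon_i(t)-\bar\epsilon_i\big)$ and setting derivatives to zero gives $-2C^{t}\epsilon_i(t)^{-3}+\nu=0$ for every $t\in\mathcal{T}_i$, i.e. $\epsilon_i(t)=\kappa\,\sqrt[3]{C^{t}}$ for a single constant $\kappa=(2/\nu)^{1/3}>0$ independent of $t$; the constraint is active because the objective is decreasing in each variable, which justifies using the full budget.

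Finally I would pin down $\kappa$ via the budget constraint: $\kappa\sum_{t\in\mathcal{T}_i}\sqrt[3]{C^{t}}=\bar\epsilon_i$, hence $\epsilon_i(t)=\bar\epsilon_i\,\sqrt[3]{C^{t}}\big/\sum_{t'\in\mathcal{T}_i}\sqrt[3]{C^{t'}}$ for $t\in\mathcal{T}_i$. To recognize the displayed formula, observe that the factor $\tfrac{\sqrt[3]{C}-1}{\sqrt[3]{C^T}-1}$ (positive, since $C\in(0,1)$ makes numerator and denominator both negative) multiplies the stated $\epsilon_i(t)$ and also appears inside $\lambda_{\mathcal{T}_i}(i)=\tfrac{\sqrt[3]{C}-1}{\sqrt[3]{C^T}-1}\sum_{t\in\mathcal{T}_i}\sqrt[3]{C^{t}}$, so it cancels, leaving exactly $\epsilon_i(t)=\bar\epsilon_i\,\sqrt[3]{C^{t}}/\sum_{t'\in\mathcal{T}_i}\sqrt[3]{C^{t'}}$; the ``$0$ otherwise'' branch merely records that no budget is spent at iterations where agent $i$ is idle. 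I do not anticipate a genuine obstacle: the only steps needing care are (i) the separability observation that lets each agent be optimized in isolation, and (ii) confirming convexity and that the budget constraint is tight, so that the stationary point found by Lagrange multipliers is the global optimum rather than merely a critical point — both are routine.
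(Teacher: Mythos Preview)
Your argument is correct and follows the same core idea as the paper: separate the noise term of Theorem~\ref{thm:utility} across agents, substitute $s_i(t)=2L_0/(\epsilon_i(t)m_i)$, and minimize $\sum_t C^t/\epsilon_i(t)^2$ subject to the linear budget constraint, obtaining $\epsilon_i(t)\propto\sqrt[3]{C^t}$.

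The route differs slightly. The paper first proves an auxiliary lemma that solves the allocation over \emph{all} iterations $t=0,\dots,T-1$ (by substitution rather than Lagrange multipliers), yielding $\epsilon_i^*(t)=\tfrac{\sqrt[3]{C}-1}{\sqrt[3]{C^T}-1}\sqrt[3]{C^t}\,\bar\epsilon_i$; here the prefactor arises naturally as the reciprocal of the geometric sum $\sum_{t=0}^{T-1}\sqrt[3]{C^t}$. It then conditions on the realized schedule $\mathcal{T}_i$ and renormalizes by $\lambda_{\mathcal{T}_i}(i)$ so that the budget is spent exactly. Your approach is more direct: you optimize over $t\in\mathcal{T}_i$ from the start, so the factor $\tfrac{\sqrt[3]{C}-1}{\sqrt[3]{C^T}-1}$ never plays a role except as a cosmetic constant that cancels. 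The paper's detour buys an interpretation of $\lambda_{\mathcal{T}_i}(i)\le 1$ as the fraction of the expected-optimal budget actually consumed on the realized schedule; your argument buys a cleaner and arguably more complete justification of optimality for the conditional problem (convexity plus Lagrange stationarity), which the paper's renormalization step leaves somewhat implicit.
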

The above noise allocation policy requires agents to know in advance the schedule and the global iteration counter. This is an unrealistic assumption in the fully decentralized setting where no global clock is available. Still, Proposition~\ref{prop:allfullprivate} may be useful to design heuristic strategies that are practical, for instance, based on using the \emph{expected} global time for the agent to wake up at each of its iterations. We leave this for future work.

\begin{remark}
Theorem~\ref{thm:utility} implies that a good warm start point $\Theta(0)$ is beneficial. However, $\Theta(0)$ must be DP. In the supplementary material, we show how to generate a private warm start based on propagating perturbed versions of purely local models in the network.
\end{remark}

\begin{figure*}[t]
    \centering
    \includegraphics[width=\textwidth]{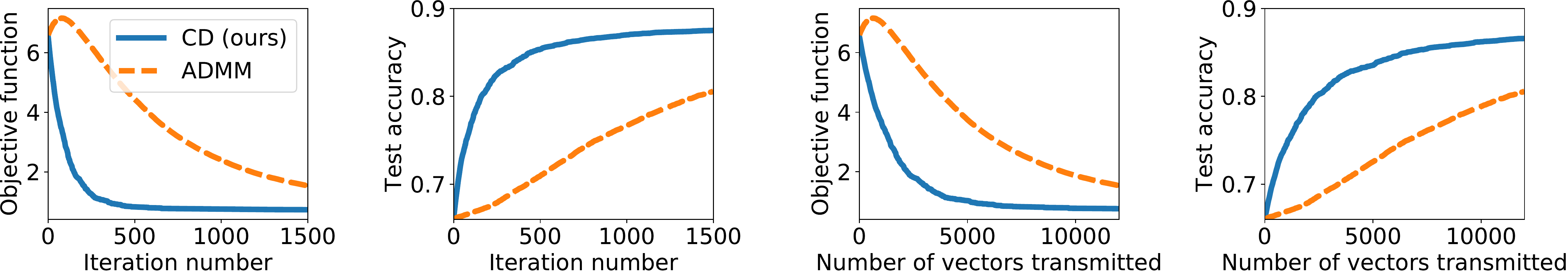}
    \vspace*{-1em}
    \caption{\label{fig:cd_vs_admm}Our Coordinate Descent (CD) algorithm compared to the existing ADMM algorithm.}
\end{figure*}

\begin{figure*}[t]
    \centering
    \subfigure[\label{fig:private:cdcst}Init. with constant vector]{
    \includegraphics[width=0.27\textwidth]{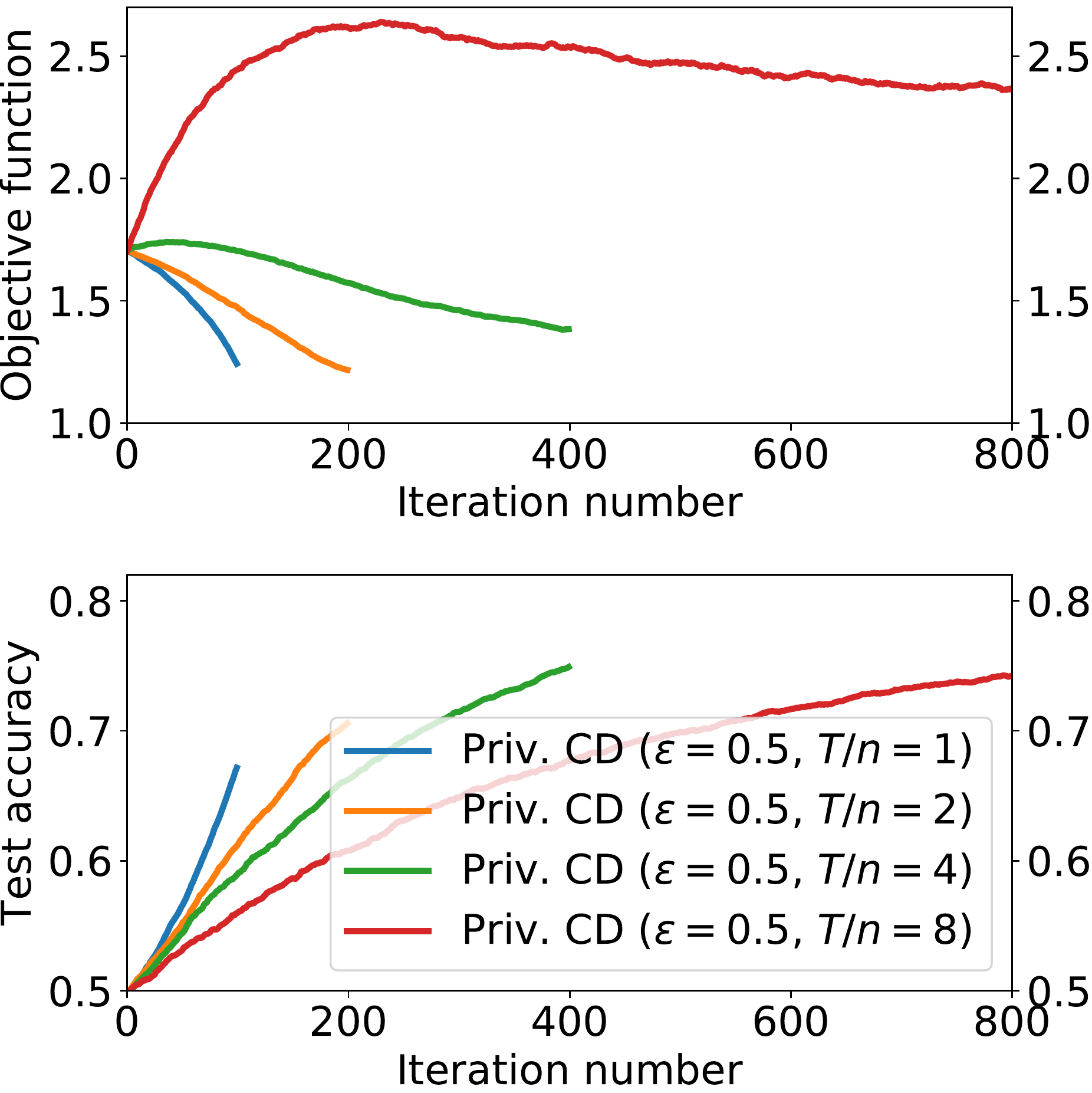}
    }
    \subfigure[\label{fig:private:cdmp}Private init. ($\epsilon=0.05$)]{
    \includegraphics[width=0.27\textwidth]{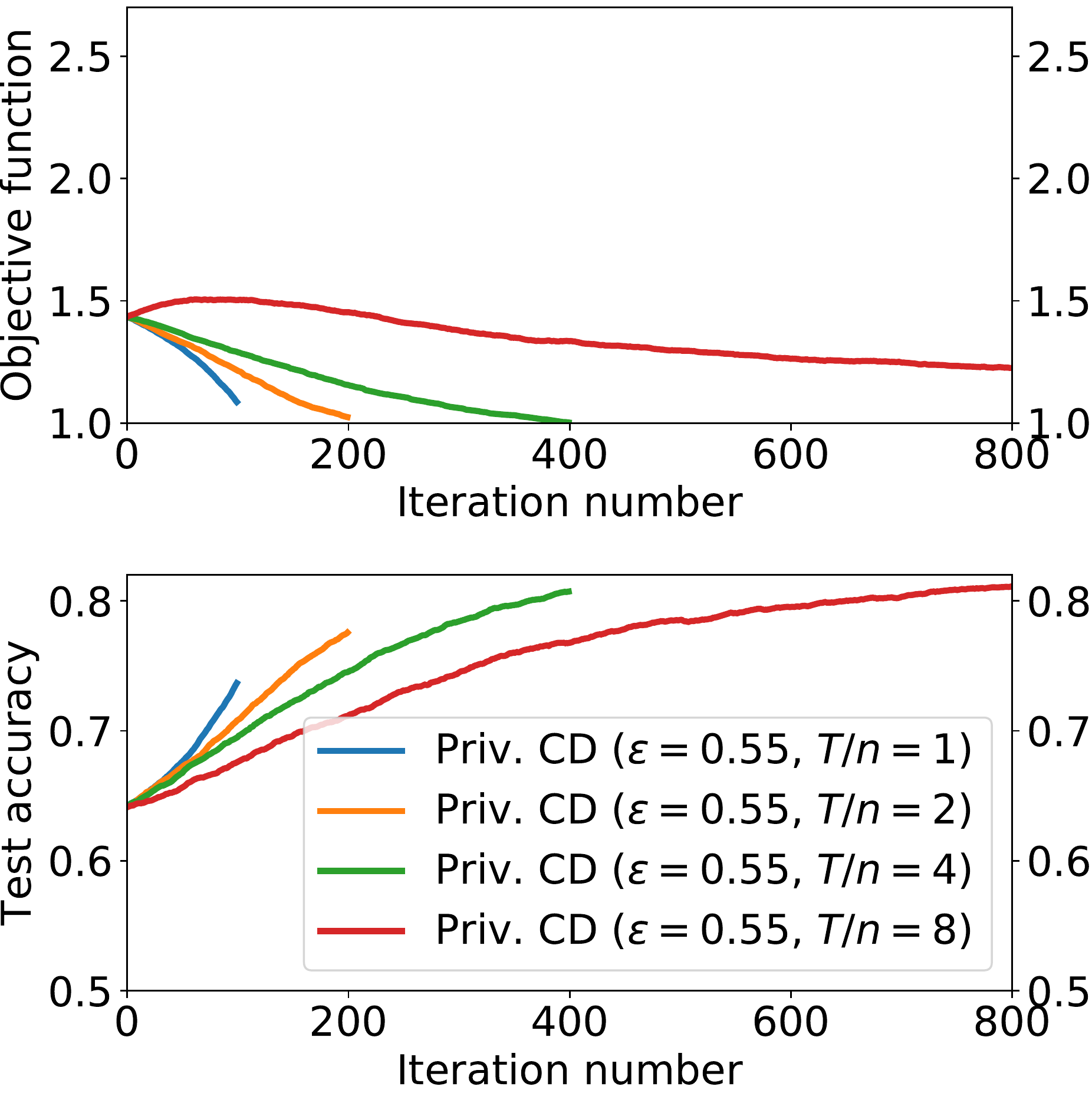}
    }
    \subfigure[\label{fig:private:general}Overall results]{
    \includegraphics[width=0.315\textwidth]{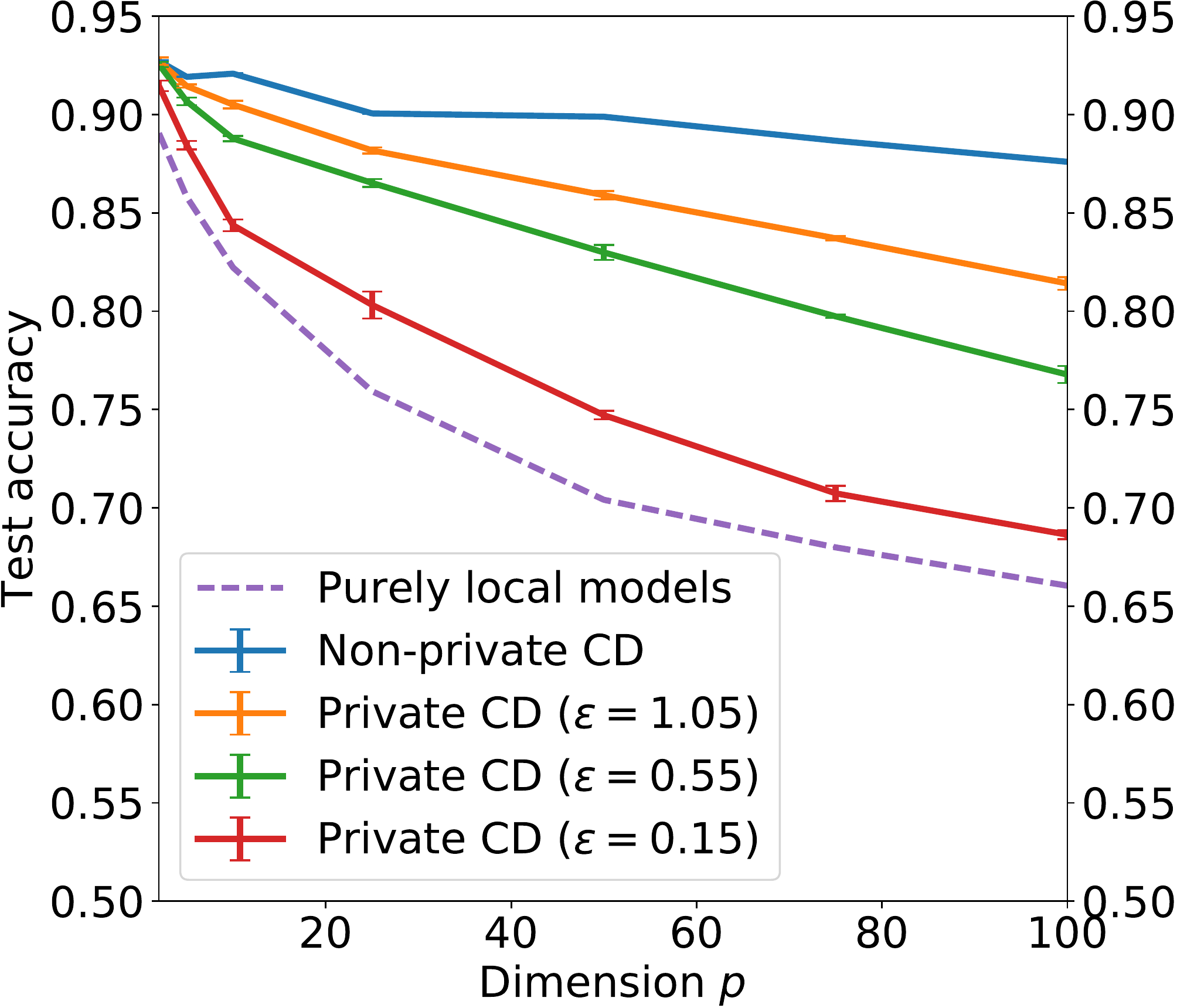}
    }
    \caption{\label{fig:private}Linear classification results in the private setting (averaged over 5 runs). (a)-(b) Evolution of the objective and test accuracy along the iterations for two types of initialization ($p=100$). (c) Final test accuracy for different dimensions and several privacy regimes. Best seen in color.}
\end{figure*}

\section{Related Work}
\label{sec:related}

\textbf{Decentralized ML.}
Most of the work in fully decentralized learning and optimization has focused on the distributed consensus problem, where the goal is to find a single global model which minimizes the sum of the local loss functions \citep{Nedic2009a,Ram2010a,Duchi2012b,Wei2012a,Colin2016a}. In contrast, we tackle the case where agents have distinct objectives.

The work of \citet{Vanhaesebrouck2017a} recently studied the problem of decentralized learning of personalized models and is hence more closely related to our approach, but they did not consider any privacy constraints. At the cost of introducing many auxiliary variables, they cast the objective function as a partial consensus problem over the network which can be solved using a decentralized gossip ADMM algorithm \citep{Wei2013a}.
Our contributions extend over this previous work in several respects: (i) we propose a simpler block coordinate descent algorithm with linear convergence rate, which also proves to be much faster in practice (see Section~\ref{sec:exp}), (ii) we design a differentially private algorithm and provide an analysis of the resulting privacy/utility trade-off, and (iii) we present an evaluation of our approach on real data (in addition to synthetic experiments).



\textbf{DP in distributed learning.}
Differential Privacy has been mostly considered in the context where a ``trusted curator'' has access to all data. Existing DP schemes for learning in this setting typically rely on the addition of noise to the learned model (output perturbation) or to the objective function itself (objective perturbation), see for instance \citet{regularized}.

The private multi-party setting, in which sensitive datasets are distributed across multiple data owners, is known to be harder \citep{McGregor2010a} and has been less studied in spite of its relevance for many applications.
Local DP \citep{Duchi2012b,Kairouz2016a}, consisting in locally perturbing the data points themselves before releasing them, often achieves poor accuracy (especially when local datasets are small). In the master/slave setting, DP algorithms have been introduced to learn a private global model, either by aggregating models trained locally by each party \citep{Pathak2010a,Hamm2016a} or by perturbing the gradients and/or the objective in a distributed gradient descent algorithm \citep{Song2013a,Rajkumar2012a,Shokri2015a}. Some of these approaches rely on the assumption that local datasets are drawn from the same global distribution. The work of \citet{Huang2015a} considers the decentralized setting, using a privacy model similar to ours. However, they still learn a single global model and it is not clear how their algorithm and analysis can be adapted to our multi-task problem. Moreover, their approach is synchronous, relies on additional assumptions (e.g., bounded second derivatives) and does not have established convergence rates.

We conclude this section by briefly mentioning the recent work of \citet{Hitaj2017a} describing an attack against differentially private collaborative deep learning approaches \citep[such as][]{Shokri2015a}. They show how a malicious participant may actively train a Generative Adversarial Network (GAN) which is able to generate prototypical examples of a class held by another agent. While this does not violate DP, it can still constitute a privacy breach in applications where a class distribution itself is considered private.
We believe that the key features of our approach, namely the fully decentralized architecture and the graph regularization over personal models, can significantly limit the effectiveness of the above attack. We leave a careful study of this question for future work.



\begin{table*}[t]
\centering
\begin{tabular}{|r|ccccc|}
\hline
& Purely local models & Non-priv. CD & Priv. $\bar{\epsilon}=1$ & Priv. $\bar{\epsilon}=0.5$ & Priv. $\bar{\epsilon}=0.1$\\
\hline
Per-user test RMSE & 1.2834 & 0.9502 & 0.9527 & 0.9545 & 0.9855\\
\hline
\end{tabular}
\caption{Per-user test RMSE (averaged over users and 5 random runs) on MovieLens-100K.}
\label{tab:ml100k}
\end{table*}

\section{Numerical Experiments}
\label{sec:exp}

\subsection{Linear Classification}

We first conduct experiments on a linear classification task introduced by \citet{Vanhaesebrouck2017a}. We briefly recall the setup. Consider a set of $n=100$ agents. Each of these agents has a target linear separator in $\R^p$ (unknown to the agent). The weight between two agents $i$ and $j$ is given by $W_{ij}=\exp((\cos(\phi_{i,j})-1)/\gamma)$, where $\phi_{i,j}$ is the angle between the target models and $\gamma=0.1$ (negligible weights are ignored).
Each agent $i$ receives a random number $m_i$ of training points (drawn uniformly between $10$ and $100$), where each point is drawn uniformly around the origin and labeled according to the target model. We then add some label noise, independently flipping each label with probability $0.05$. We use the logistic loss $\loss(\theta;x,y)=\log(1+\exp(-y\theta^Tx))$ (which is $1$-Lipschitz), and the L2 regularization parameter of an agent $i$ is set to $\lambda_i=1/m_i>0$ to ensure the overall strong convexity. The hyperparameter $\mu$ is tuned to maximize accuracy of the non-private algorithm on a validation set of random problems instances. For each agent, the test accuracy of a model is estimated on a separate sample of $100$ points.

\textbf{Non-private setting: CD vs ADMM.} We start by comparing our Coordinate Descent (CD) algorithm to the ADMM algorithm proposed by \citet{Vanhaesebrouck2017a} for the non-private setting. Both algorithms are fully decentralized and asynchronous, but our algorithm relies on broadcast (one-way communication from an agent to all neighbors) while the ADMM algorithm is gossip-based (two-way communication between a node and a random neighbor). Which communication model is the most efficient strongly depends on the network infrastructure, but we can meaningfully compare the algorithms by tracking the objective value and the test accuracy with respect to the number of iterations and the number of $p$-dimensional vectors transmitted along the edges of the network. Both algorithms are initialized using the purely local models, i.e. $\Theta_i(0)=\Theta_i^{\loc}$ for all $i\in\intset{n}$.
Figure~\ref{fig:cd_vs_admm} shows the results (averaged over 5 runs) for dimension $p=100$: our coordinate descent algorithm significantly outperforms ADMM despite the fact that ADMM makes several local gradient steps at each iteration ($10$ in this experiment). We believe that this is mostly due to the fact that the 4 auxiliary variables \emph{per edge} needed by ADMM to encode smoothness constraints are updated only when the associated edge is activated. In contrast, our algorithm does not require auxiliary variables.

\textbf{Private setting.}
In this experiment, each agent has the same overall privacy budget $\bar{\epsilon}_i = \bar{\epsilon}$. It splits its privacy budget equally across $T_i=T/n$ iterations using Theorem~\ref{thm:privacy} with $\bar{\delta}_i = \exp(-5)$, and stops updating when it is done.
We first illustrate empirically the trade-offs implied by Theorem~\ref{thm:utility}: namely that running more iterations per agent reduces the first term of the bound but increases the second term because more noise is added at each iteration. This behavior is easily seen in Figure~\ref{fig:private:cdcst}, where $\Theta(0)$ is initialized to a constant vector. 
In Figure~\ref{fig:private:cdmp}, we have initialized the algorithm with a private warm start solution with $\epsilon=0.05$ (see supplementary material). The results confirm that for a modest additional privacy budget, a good warm start can lead to lower values of the objective with less iterations (as suggested again by Theorem~\ref{thm:utility}). The gain in test accuracy here is significant.


Figure~\ref{fig:private:general} shows results for problems of increasing difficulty (by varying the dimension $p$) with various privacy budgets. We have used the same private warm start strategy as in Figure~\ref{fig:private:cdmp}, and the number of iterations per node was tuned based on a validation set of random problems instances.
We see that even under a small privacy budget ($\bar{\epsilon}=0.15$), the resulting models significantly outperform the purely local models (a perfectly private baseline).
As can be seen in the supplementary material, all agents (irrespective of their dataset size) get an improvement in test accuracy. This improvement is especially large for users with small local datasets, effectively correcting for the imbalance in dataset size. We also show that perturbing the data itself, a.k.a. local DP \citep{Duchi2012b,Kairouz2016a}, leads to very inaccurate models. 

\subsection{Recommendation Task}

To illustrate our approach on real-world data, we use MovieLens-100K,\footnote{\url{https://grouplens.org/datasets/movielens/100k/}} a popular benchmark dataset for recommendation systems which consists of 100,000 ratings given by $n=943$ users over a set of $n_{items}=1682$ movies. In our setting, each user $i$ corresponds to an agent who only has access to its own ratings $r_{ij_1},\dots,r_{ij_{m_i}}\in\R$, where $j_1,\dots,j_{m_i}$ denote the indices of movies rated by agent $i$. Note that there is a large imbalance across users: on average, a user has rated 106 movies but the standard deviation is large ($\simeq100$), leading to extreme values (min=20, max=737).
For simplicity, we assume that a common feature representation $\phi_j\in\R^p$ for each movie $j\in\intset{n_{items}}$ is known a priori by all agents ($p=20$ in our experiments). The goal of each agent $i$ is to learn a model $\theta_i\in\R^p$ such that $\theta_i^T\phi_j$ is a good estimate for the rating that $i$ would give to movie $j$, as measured by the quadratic loss $\ell(\theta;\phi,r) = (\theta^T\phi - r)^2$. This is a very simple model: we emphasize that our goal is not to obtain state-of-the-art results on this dataset but to show that our approach can be used to improve upon purely local models in a privacy-preserving manner.
For each agent, we randomly sample 80\% of its ratings to serve as training set and use the remaining 20\% as test set. The network is obtained by setting $W_{ij}=1$ if agent $i$ is within the $10$-nearest neighbors of agent $j$ (or vice versa) according to the cosine similarity between their training ratings, and $W_{ij}=0$ otherwise.
Due to the lack of space, additional details on the experimental setup are deferred to the supplementary material.



Table~\ref{tab:ml100k} shows the test RMSE (averaged over users) for different strategies (for private versions, we use $\bar{\delta}_i = \exp(-5)$ as in the previous experiment). While the purely local models suffer from large error due to data scarcity, our approach can largely outperform this baseline in both the non-private and private settings.


\section{Conclusion}
\label{sec:conclu}

We introduced and analyzed an efficient algorithm for personalized and peer-to-peer machine learning under privacy constraints.
We argue that this problem is becoming more and more relevant as connected objects become ubiquitous. Further research is needed to address dynamic scenarios (agents join/leave during the execution, data is collected on-line, etc.). We will also explore the use of secure multiparty computation and homomorphic encryption as an alternative/complementary approach to DP in order to provide higher accuracy at the cost of more computation.

\paragraph{Acknowledgments} This work was partially supported by grant ANR-16-CE23-0016-01, by a grant from CPER Nord-Pas de Calais/FEDER DATA Advanced data science and technologies 2015-2020 and by European ERC Grant 339539 - AOC (Adversary-Oriented Computing).

\bibliographystyle{apalike}

{\small 
\bibliography{aistats18_CDPrivacy}
}

\appendix

\section*{SUPPLEMENTARY MATERIAL}

This supplementary material is organized as follows. Section~\ref{sec:proofs} contains the proofs of the results in the main text. Section~\ref{sec:utilityanalysis} provides further analysis of Theorem~\ref{thm:utility} for the case where the noise scales are uniform across iterations. Section~\ref{sec:mp} deals with the interesting special case of model propagation and its use as a private warm start strategy. Finally, Section~\ref{sec:addexp} presents additional experimental results and details.

\section{Proofs}
\label{sec:proofs}

\subsection{Proof of Theorem~\ref{thm:privacy}}

We first show that for agent $i$ and an iteration $t_i \in \mathcal{T}_i$, the additional noise $\eta_i(t_i)$ provides $(\epsilon_i(t_i),0)$-differential privacy for the published $\Theta_i(t_i+1)$. In the following, two datasets $\mathcal{S}_i^{1}$ and $\mathcal{S}_i^{2}$ are called neighbors if they differ in a single data point. We denote this neighboring relation by $\mathcal{S}_i^{1}\approx \mathcal{S}_i^{2}$.

We will need the following lemma.

\begin{lemma}
\label{localLossBound}
For two neighboring datasets $\mathcal{S}_i^{1}$ and $\mathcal{S}_i^{2}$ of the size $m_i$:
\begin{align*}
\| \nabla\Loss_i(\Theta_i; \mathcal{S}_i^{1}) - \nabla\Loss_i(\Theta_i; \mathcal{S}_i^{2})\|_{1} \leq \frac{2 \cdot L_0}{m_i}.
\end{align*}
\end{lemma}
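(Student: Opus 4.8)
The plan is to reduce the left-hand side to a difference of two per-example gradients and then apply the Lipschitz hypothesis. Write the two neighboring datasets as $\mathcal{S}_i^1=\{z_1,\dots,z_{\ell-1},z_\ell,z_{\ell+1},\dots,z_{m_i}\}$ and $\mathcal{S}_i^2=\{z_1,\dots,z_{\ell-1},z_\ell',z_{\ell+1},\dots,z_{m_i}\}$, so that they agree on every point except the $\ell$-th. Using the explicit form $\Loss_i(\theta;\mathcal{S})=\frac{1}{m_i}\sum_{k=1}^{m_i}\loss_i(\theta;z_k)+\lambda_i\|\theta\|^2$, the gradient is $\nabla\Loss_i(\theta;\mathcal{S})=\frac{1}{m_i}\sum_{k=1}^{m_i}\nabla\loss_i(\theta;z_k)+2\lambda_i\theta$.

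First I would subtract the two gradients at the common point $\Theta_i$. The regularization term $2\lambda_i\Theta_i$ does not depend on the dataset and cancels, and every summand indexed by $k\neq\ell$ appears identically (with the same $1/m_i$ weight) in both sums and cancels as well. What remains is
\[
\nabla\Loss_i(\Theta_i;\mathcal{S}_i^1)-\nabla\Loss_i(\Theta_i;\mathcal{S}_i^2)=\tfrac{1}{m_i}\big(\nabla\loss_i(\Theta_i;z_\ell)-\nabla\loss_i(\Theta_i;z_\ell')\big).
\]
Taking $\ell_1$-norms and using the triangle inequality gives $\|\nabla\Loss_i(\Theta_i;\mathcal{S}_i^1)-\nabla\Loss_i(\Theta_i;\mathcal{S}_i^2)\|_1\le\tfrac{1}{m_i}\big(\|\nabla\loss_i(\Theta_i;z_\ell)\|_1+\|\nabla\loss_i(\Theta_i;z_\ell')\|_1\big)$.

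It then remains to bound each per-example gradient. This is where the $L_0$-Lipschitz assumption on $\loss(\cdot;x,y)$ enters: it implies $\|\nabla\loss_i(\theta;x,y)\|_1\le L_0$ for every $\theta\in\R^p$ and every $(x,y)\in\mathcal{X}\times\mathcal{Y}$ (using that $\loss$ is differentiable in its first argument, as assumed throughout; otherwise one works with subgradients). Substituting this bound for both terms yields exactly $\tfrac{2L_0}{m_i}$, which is the claim.

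The argument is essentially bookkeeping, so there is no genuine obstacle; the only points requiring a little care are isolating the data-independent regularizer before comparing the two datasets, and correctly translating the Lipschitz hypothesis into the per-example $\ell_1$-bound on the gradient. This lemma supplies precisely the $\ell_1$-sensitivity estimate needed to invoke the standard Laplace-mechanism guarantee for each published iterate $\widetilde{\Theta}_i(t_i+1)$; combined with post-processing (the update \eqref{eq:cdupdatepriv} is an affine map of the noisy gradient) and a composition argument over the $T_i$ iterations at which agent $i$ wakes up, this will establish Theorem~\ref{thm:privacy}.
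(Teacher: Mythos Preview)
Your proof is correct and follows essentially the same approach as the paper's: both isolate the single differing summand (the regularizer and all shared points cancel), apply the triangle inequality to the remaining $\tfrac{1}{m_i}$-scaled difference of per-example gradients, and invoke the $L_0$-Lipschitz hypothesis to bound each $\|\nabla\loss_i(\Theta_i;\cdot)\|_1$ by $L_0$. Your write-up is slightly more explicit about the regularizer cancellation, but otherwise the arguments are identical.
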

\begin{proof}
Assume that instead of data point $(x_1, y_1)$ in $\mathcal{S}_i^{1}$, there is $(x_2, y_2)$ in $\mathcal{S}_i^{2}$. As $\mathcal{S}_i^{1}$ and $\mathcal{S}_i^{2}$ are neighboring datasets, the other data points in $\mathcal{S}_i^{1}$ and $\mathcal{S}_i^{2}$ are the same. Hence:
$$
\| \nabla\Loss_i(\Theta_i; \mathcal{S}_i^{1}) - \nabla\Loss_i(\Theta_i; \mathcal{S}_i^{2})\|_{1} =  \frac{1}{m_i} \| \nabla\loss(\Theta_i; x_1,y_1) - \nabla\loss(\Theta_i; x_2,y_2) \|_{1} \leq \frac{2 \cdot L_0}{m_i},
$$
since the $L_0$-Lipschitzness of $\loss(\cdot; x,y)$ (with respect to the $L_1$-norm) for all $(x,y)\in\mathcal{X}\times\mathcal{Y}$ implies that for any $\Theta_i\in\R^p$ and $(x,y)\in\mathcal{X}\times\mathcal{Y}$, we have $\|\nabla\loss(\Theta_i; x,y)\|_{1}\leq L_0$.
\end{proof}
We continue the proof by bounding the \emph{sensitivity} of $\Theta_{i}(t_i+1)$ to find the noise scale needed to satisfy $(\epsilon,0)$-differential privacy. Using Eq.~\ref{eq:partialgrad}, Eq.~\ref{eq:cdupdate} and Lemma~\ref{localLossBound}, we have:
\begin{align}
sensitivity(\Theta_{i}(t_i+1)) &= \max_{\mathcal{S}_i^{1}\approx \mathcal{S}_i^{2}} \|\Theta_{i}(t_i + 1)\|_1\nonumber\\
 & = \max_{\mathcal{S}_i^{1}\approx \mathcal{S}_i^{2}}\| \frac{1}{L_i}[\nabla\costuni(\Theta(t_i))]_i\|_1\label{eq:sens1}\\
& =\frac{\mu c_i  D_{ii}}{L_i} \max_{\mathcal{S}_i^{1}\approx \mathcal{S}_i^{2}} \| \nabla\Loss_i(\Theta_i; \mathcal{S}_i^{1}) - \nabla\Loss_i(\Theta_i; \mathcal{S}_i^{2})\|_{1}\label{eq:sens2}\\
& \leq \frac{2 \mu c_i  D_{ii} L_0}{m_i L_i},\label{eq:sens3}
\end{align}
where \eqref{eq:sens1}-\eqref{eq:sens2} follow from the fact that $[\nabla\costuni(\Theta(t_i))]_i$ is the only quantity in the update \eqref{eq:cdupdate} which depends on the local dataset of agent $i$.


Recalling the relation between sensitivity and the scale of the addition noise in the context of differential privacy~\citep{sensitivity}, we should have:
$$\epsilon_{i}(t_i) \cdot s_i^* \geq sensitivity(\Theta_{i}(t_i+1)) = \frac{2 \mu c_i  D_{ii} L_0}{m_i L_i},$$
where $s_i^*$ is the scale of the noise added to $\Theta_{i}(t_i+1)$. In the following we show that $s_i^* \geq  \frac{2 \mu c_i  D_{ii} L_0}{m_i L_i \epsilon_{i}(t_i)}$. To compute $s_i^*$, we see how the noise $\eta_i(t_i)$ affects $\widetilde{\Theta}_i(t_i+1) $. Using Eq.~\ref{eq:cdupdatepriv}, definitions of $\alpha$ (Update step page~\pageref{eq:cdupdate}) and $L_i$ (the block Lipschitz constant) we have:
\begin{align*}
\widetilde{\Theta}_i(t_i+1)  &= ( 1-\alpha ) \Theta_i(t) + \alpha \bigg( \sum_{j\in\Nei{i}}\frac{W_{ij}}{D_{ii}}\Theta_j(t) - \mu c_i(\nabla\Loss_i(\Theta_i(t); \mathcal{S}_i)+ \eta_i(t)) \bigg) \\
 & = ( 1-\alpha ) \Theta_i(t) + \alpha \bigg( \sum_{j\in\Nei{i}}\frac{W_{ij}}{D_{ii}}\Theta_j(t) - \mu c_i \nabla\Loss_i(\Theta_i(t); \mathcal{S}_i) \bigg) - \alpha \mu c_i \eta_i(t)\\
 & = \Theta_i(t_i+1) - \frac{\mu c_i \eta_i(t)}{1+\mu c_i L_i^{loc}} \\
 & = \Theta_i(t_i+1)  -\frac{\mu c_i D_{ii} }{L_i} \cdot \eta_i(t_i).
\end{align*}
So the scale of the noise added to $\Theta_i(t_i+1)$ 
is:
$$s_{i}^* = \frac{\mu c_i D_{ii} }{L_i} \cdot s_i(t_i) = \frac{\mu c_i D_{ii} }{L_i} \cdot \frac{2 L_0}{\epsilon_{i}(t_i) m_i} = \frac{2 \mu c_i  D_{ii} L_0}{\epsilon_{i}(t_i) m_i L_i}.$$
Therefore, $s_i^* \geq \frac{sensitivity(\Theta_{i}(t_i+1))}{\epsilon_{i}(t_i) }$ is satisfied, hence publishing $\widetilde{\Theta}_i(t_i+1)$ is $(\epsilon_i(t_i), 0)$-differentially private.

We have shown that at any iteration $t_i \in \mathcal{T}_i$, publishing $\ttheta_i(t_i+1)$ by agent $i$ is $(\epsilon_i(t_i),0)$ differentially private. The mechanism $\mathcal{M}_i$ published all $\ttheta_{i}(t_i+1)$ for $t_i \in \mathcal{T}_i$. Using the composition result for differential privacy established by \citet{composition}, we have that the mechanism $\mathcal{M}_{i}(\mathcal{S}_i)$ is $(\bar{\epsilon}_i,\bar{\delta}_i)$-DP with $\bar{\epsilon}_i,\bar{\delta}_i$ as in Theorem~\ref{thm:privacy}.

Theorem~\ref{thm:privacy} considers the case where $\loss(\cdot; x,y)$ is $L_0$-Lipschitz for all $(x,y)\in\mathcal{X}\times\mathcal{Y}$ with respect to the $L_1$-norm. We could instead assume Lipschitzness with respect to the $L_2$-norm, in which case the noise to add should be Gaussian instead of Laplace. The following remark computes the additional normal noise to preserve differential privacy in this setting.
\begin{remark}
Let $i\in\intset{n}$.
In the case where $\loss(\cdot; x,y)$ is $L^*_0$-Lipschitz with respect to the $L_2$-norm for all $(x,y)\in\mathcal{X}\times\mathcal{Y}$, for any $t_i\in\mathcal{T}_i$, let $s_i(t_i) \geq 2 L^*_0 \sqrt{2 \ln (2/\delta_i(t_i))}/\epsilon_i(t_i)$ for some $\epsilon_i(t_i)>0$ and $\delta_i(t_i) \in [0,1]$.
For the noise vector $\eta_i(t)$ drawn from a Gaussian distribution $\mathcal{N}(0,s_i(t))^p\in\R^p$ with scale $s_i(t_i)$,
and for any $\bar{\delta}_i\in[0,1]$ and initial point $\widetilde{\Theta}(0)\in\R^{np}$ independent of $\mathcal{S}_i$, the mechanism $\mathcal{M}_{i}(\mathcal{S}_i)$ is $(\bar{\epsilon}_i,1-(1-\bar{\delta}_i)\prod_{t_i = 1}^{T_i}(1-\delta_i(t_i)))$-DP with
\begin{align*}
\bar{\epsilon}_i = \min\Bigg\{ & \sum_{t_i = 1}^{T_i} \epsilon_{i}(t_i), \sum_{t_i = 1}^{T_i}{\frac{(e^{\epsilon_{i}(t_i)}-1)\epsilon_{i}(t_i)}{e^{\epsilon_{i}(t_i)}+1}+\sqrt{\sum_{t_i = 1}^{T_i} 2\epsilon_{i}(t_i)^2\log\Big(e+\frac{\sqrt{\sum_{t_i = 1}^{T_i}\epsilon_{i}(t_i)^{2}}}{\bar{\delta_{i}}}\Big)}},\\
& \sum_{t_i = 1}^{T_i}{\frac{(e^{\epsilon_{i}(t_i)}-1)\epsilon_{i}(t_i)}{e^{\epsilon_{i}(t_i)}+1}+\sqrt{\sum_{t_i = 1}^{T_i} 2\epsilon_{i}(t_i)^2\log(1/\bar{\delta_{i}})}}\Bigg\}.
\end{align*}
\end{remark}

\subsection{Proof of Theorem~\ref{thm:utility}}

We start by introducing a convenient lemma.

\begin{lemma}
\label{lem:lip1}
For any $i\in[n]$, $\Theta\in\R^{np}$ and $d\in\R^p$ we have:
$$\costuni(\Theta+U_id) \leq \costuni(\Theta) + d^T[\nabla\costuni(\Theta)]_i + \frac{L_i}{2}\|d\|^2.$$
\end{lemma}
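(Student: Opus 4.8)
The plan is to prove this block descent lemma by the standard device of integrating the $i$-th block gradient along the segment joining $\Theta$ and $\Theta+U_id$, which is exactly how one derives the usual quadratic upper bound from Lipschitz continuity of a gradient, specialized to a single block.

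First I would fix $i\in[n]$, $\Theta\in\R^{np}$, $d\in\R^p$, and introduce the scalar function $g:[0,1]\to\R$ defined by $g(\tau)=\costuni(\Theta+\tau U_id)$. Since $U_id$ perturbs only the $i$-th block of coordinates, the chain rule gives $g'(\tau)=d\T[\nabla\costuni(\Theta+\tau U_id)]_i$, and this is continuous in $\tau$ because $\nabla\costuni$ is (block-)Lipschitz. The fundamental theorem of calculus then yields
\[
\costuni(\Theta+U_id)-\costuni(\Theta)=g(1)-g(0)=\int_0^1 d\T[\nabla\costuni(\Theta+\tau U_id)]_i\,d\tau.
\]

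Next I would subtract the linear term $d\T[\nabla\costuni(\Theta)]_i=\int_0^1 d\T[\nabla\costuni(\Theta)]_i\,d\tau$ from both sides, giving
\[
\costuni(\Theta+U_id)-\costuni(\Theta)-d\T[\nabla\costuni(\Theta)]_i=\int_0^1 d\T\big([\nabla\costuni(\Theta+\tau U_id)]_i-[\nabla\costuni(\Theta)]_i\big)\,d\tau,
\]
and bound the integrand by Cauchy--Schwarz by $\|d\|\cdot\|[\nabla\costuni(\Theta+\tau U_id)]_i-[\nabla\costuni(\Theta)]_i\|$. Here I invoke the definition of the $i$-th block Lipschitz constant $L_i$ recalled in Section~\ref{sec:obj} (with value $L_i=D_{ii}(1+\mu c_iL_i^{loc})$), applied with the block increment $\tau d$, to get $\|[\nabla\costuni(\Theta+\tau U_id)]_i-[\nabla\costuni(\Theta)]_i\|\leq L_i\|\tau d\|=\tau L_i\|d\|$. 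Substituting and using $\int_0^1\tau\,d\tau=\tfrac12$ gives the integral $\leq\tfrac{L_i}{2}\|d\|^2$, which is exactly the claimed bound.

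There is essentially no real obstacle here; the only two points needing a moment of care are (i) observing that moving along $U_id$ activates only the $i$-th block, so the chain rule produces the $i$-th partial gradient $[\nabla\costuni(\cdot)]_i$ rather than the full gradient, and (ii) applying the block-Lipschitz inequality with increment $\tau d$ (not $d$), which is what produces the factor $\tau$ inside the integral and hence the constant $\tfrac12$. Everything else is the textbook descent-lemma argument; one could equivalently note that $g'$ is $L_i\|d\|^2$-Lipschitz on $[0,1]$ and quote the one-dimensional version, but the direct integration above is the cleanest route.
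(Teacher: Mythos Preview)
Your argument is correct and is precisely the standard derivation of the block descent lemma from block-Lipschitz continuity of the gradient; the paper's own proof merely invokes ``Taylor's inequality'' applied to the function $d\mapsto\costuni(\Theta+U_id)$ without spelling out the integration, so you are giving the same approach in full detail.
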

\begin{proof}
We get this by applying Taylor's inequality to the function
\begin{eqnarray*}
q_\Theta^x &:& \R^p \rightarrow \R\\
&& d \mapsto \costuni(\Theta+U_id).
\end{eqnarray*}
\end{proof}

Recall that the random variable $\eta_{i}(t)\in\R^p$ represents the noise added by agent $i\in\intset{n}$ due to privacy requirements if it wakes up at iteration $t\geq 0$. To simplify notations we denote the scaled version of the noise by $\widetilde{\eta}_{i}(t) = \mu D_{ii} c_i \eta_{i}(t)$.

Let $i_t$ be the agent waking up at iteration $t$. Using Lemma~\ref{lem:lip1}, we have:
\begin{align*}
\costuni(\ttheta(t+1)) &= \costuni\left(\ttheta(t)-\frac{U_{i_t}}{L_{i_t}}\left([\nabla\costuni(\ttheta(t))]_{i_t} + \widetilde{\eta}_{i_t}(t)\right)\right)\\
&\leq \costuni(\ttheta(t)) - \frac{1}{L_{i_t}}[\nabla\costuni(\ttheta(t))]_{i_t}^T\left([\nabla\costuni(\ttheta(t))]_{i_t} + \widetilde{\eta}_{i_t}(t)\right)\\
&+ \frac{1}{2L_{i_t}}\|[\nabla\costuni(\ttheta(t))]_{i_t} + \widetilde{\eta}_{i_t}(t)\|^2\\
&= \costuni(\ttheta(t)) - \frac{1}{L_{i_t}}\|[\nabla\costuni(\ttheta(t))]_{i_t}\|^2 + \frac{1}{2L_{i_t}}\|[\nabla\costuni(\ttheta(t))]_{i_t}\|^2+ \frac{1}{2L_{i_t}}\|\widetilde{\eta}_{i_t}(t)\|^2\\
&\leq \costuni(\ttheta(t)) - \frac{1}{2L_{max}}\|[\nabla\costuni(\ttheta(t))]_{i_t}\|^2 + \frac{1}{2L_{min}}\|\widetilde{\eta}_{i_t}(t)\|^2,
\end{align*}
where $L_{min} = \min_{1\leq i \leq n} L_i$.

Recall that under our Poisson clock assumption, each agent is equally likely to wake up at any step $t$.
Subtracting $\costuni^*$ and taking the expectation with respect to $i_t$ on both sides, we thus get:
\begin{align}
\myexp_{i_t}[\costuni(\ttheta(t+1))] - \costuni^* &\leq \costuni(\ttheta(t)) - \costuni^* - \frac{1}{2L_{max}}\frac{1}{n}\sum_{j=1}^n\|[\nabla\costuni(\ttheta(t))]_{j}\|^2\nonumber\\
&+ \frac{1}{2L_{min}}\frac{1}{n}\sum_{j=1}^n\|\widetilde{\eta}_{j}(t)\|^2\nonumber\\
&= \costuni(\ttheta(t)) - \costuni^* - \frac{1}{2nL_{max}}\|\nabla\costuni(\ttheta(t))\|^2 + \frac{1}{2nL_{min}}\|\widetilde{\eta}(t)\|^2,\label{eq:proof2}
\end{align}
where $\widetilde{\eta}(t)=[\widetilde{\eta}_1(t);\dots,;\widetilde{\eta}_n(t)]\in\R^{np}$.

For convenience, let us define $P_t = \myexp[\costuni(\ttheta(t))] - \costuni^*$ where $\myexp[\cdot]$ denotes the expectation with respect to all variables $\{i_t\}_{t\geq 0}$ and $\{\eta(t)\}_{t\geq 0}$. Using \eqref{eq:proof2} we thus have:
\begin{align}
P_{t+1} &\leq P_t - \frac{1}{2nL_{max}}\myexp[\|\nabla\costuni(\ttheta(t))\|^2] + \frac{1}{2nL_{min}}\myexp[\|\widetilde{\eta}(t)\|^2]\label{eq:recur2}
\end{align}

Recall that $\costuni$ is $\sigma$-strongly convex, i.e. for any $\Theta,\Theta'\in\mathbb{R}^{np}$ we have:
$$\costuni(\Theta') \geq \costuni(\Theta) + \nabla\costuni(\Theta)^T(\Theta'-\Theta) + \frac{\sigma}{2}\|\Theta'-\Theta\|_2^2.$$
We minimize the above inequality on both sides with respect to $\Theta'$. We obtain that $\Theta'=\Theta^*$ minimizes the left-hand side, while $\Theta'=\Theta - \nabla\costuni(\Theta) / \sigma$. We thus have for any $\Theta$:
$$\costuni^* \geq \costuni(\Theta) -\frac{1}{\sigma} \nabla\costuni(\Theta)^T\nabla\costuni(\Theta) + \frac{1}{2\sigma}\|\nabla\costuni(\Theta)\|_2^2 = \costuni(\Theta) -\frac{1}{2\sigma} \|\nabla\costuni(\Theta)\|^2.$$

Using the above inequality to bound $\|\nabla\costuni(\ttheta(t))\|^2$ in \eqref{eq:recur2}, we obtain:
$$P_{t+1} \leq P_t - \frac{\sigma}{nL_{max}}P_t + \frac{1}{2nL_{min}}\myexp[\|\widetilde{\eta}(t)\|^2] = \left(1 - \frac{\sigma}{nL_{max}}\right)P_t +\frac{1}{2nL_{min}}\myexp[\|\widetilde{\eta}(t)\|^2].$$
A simple recursion on $P_t$ gives:
\begin{equation}
\label{eq:util}
P_t \leq \left( 1 - \frac{\sigma}{nL_{max}} \right)^t P_0 + \frac{1}{2nL_{min}}\sum_{t'=0}^{t-1}\left( 1 - \frac{\sigma}{nL_{max}} \right)^{t'}\myexp[\|\widetilde{\eta}(t')\|^2].
\end{equation}

For any $t\geq 0$ and $i\in\intset{n}$, the entries of $\eta_i(t)$ are drawn from independent Laplace distributions with mean $0$ and scale $s_i(t)$, hence we have:
$$\myexp[\|\widetilde{\eta}(t)\|^2] = \myexp\left[\sum_{i=1}^n\|\mu D_{ii} c_i\eta_i(t)\|^2\right] = \sum_{i=1}^n 2(\mu D_{ii} c_is_i(t))^2.$$
This concludes the proof.

\subsection{Proof of Proposition~\ref{prop:allfullprivate}}

We start the proof with the following lemma.
\begin{lemma}
\label{cor:allocation}
Let $C=1 - \sigma/nL_{max}$. Assume that for any $i\in\intset{n}$, we have $s_i(t_i)=\frac{2 L_0}{\epsilon_i(t_i) m_i}$ for $t_i\in\mathcal{T}_i$ as in Theorem~\ref{thm:privacy}. Given a total number of iterations $T$ and the overall privacy budgets $\bar{\epsilon}_1,\dots,\bar{\epsilon}_n>0$ of each agent, the following privacy parameters minimize the utility loss:
$$
\epsilon_i^*(t) = \frac{\sqrt[3]{ C} - 1}{\sqrt[3]{C^T} - 1} \sqrt[3]{C^t} \bar{\epsilon}_i,\quad\forall i\in\intset{n}, 0\leq t \leq T-1.
$$
\end{lemma}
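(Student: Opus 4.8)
The plan is to read the statement as a routine constrained convex minimization extracted from the utility bound of Theorem~\ref{thm:utility}. For a fixed number of iterations $T$ and a fixed data-independent initial point, the first term in that bound does not involve the noise scales, so minimizing the bound amounts to minimizing its additive part $\frac{1}{nL_{min}}\sum_{t=0}^{T-1}\sum_{i=1}^{n}C^{t}(\mu D_{ii}c_i s_i(t))^2$ over the admissible privacy parameters, where $C = 1-\sigma/(nL_{max})\in(0,1)$. Substituting $s_i(t) = 2L_0/(\epsilon_i(t)m_i)$ as prescribed by Theorem~\ref{thm:privacy} rewrites this quantity as $\sum_{i=1}^{n}\kappa_i\sum_{t=0}^{T-1}C^{t}/\epsilon_i(t)^2$ with $\kappa_i = (2\mu D_{ii}c_i L_0/m_i)^2/(nL_{min}) > 0$ independent of the $\epsilon_i(\cdot)$; what matters is only that $\kappa_i$ is positive.

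First I would note that this objective is separable across agents, and that under the pure differential privacy composition of Theorem~\ref{thm:privacy} agent $i$'s budget enters only through the affine constraint $\sum_{t=0}^{T-1}\epsilon_i(t) = \bar\epsilon_i$ (this is exactly the "budget matched exactly" requirement). Hence it suffices to solve, for each $i$ independently, $\min \sum_{t=0}^{T-1} C^{t}/\epsilon_i(t)^2$ subject to $\sum_{t=0}^{T-1}\epsilon_i(t) = \bar\epsilon_i$ and $\epsilon_i(t) > 0$. Each summand $\epsilon\mapsto C^{t}/\epsilon^2$ is strictly convex on $(0,\infty)$ and diverges as $\epsilon\to 0^+$, so the program is strictly convex with an affine equality constraint and its minimizer lies in the interior; the KKT conditions are therefore necessary and sufficient. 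Writing the Lagrangian $\sum_{t}C^{t}/\epsilon_i(t)^2 + \nu(\sum_{t}\epsilon_i(t)-\bar\epsilon_i)$ and setting its partial derivatives to zero gives $2C^{t}/\epsilon_i(t)^3 = \nu$ for all $t$, i.e. $\epsilon_i(t) = \beta_i\,C^{t/3}$ for a single positive constant $\beta_i = (2/\nu)^{1/3}$.

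Finally I would pin down $\beta_i$ from the constraint using the geometric sum $\sum_{t=0}^{T-1}C^{t/3} = (\sqrt[3]{C^{T}}-1)/(\sqrt[3]{C}-1)$, which is legitimate because $C\in(0,1)$ so $\sqrt[3]{C}\neq 1$; this yields $\beta_i = \bar\epsilon_i(\sqrt[3]{C}-1)/(\sqrt[3]{C^{T}}-1)$ and hence $\epsilon_i^*(t) = \frac{\sqrt[3]{C}-1}{\sqrt[3]{C^{T}}-1}\sqrt[3]{C^{t}}\,\bar\epsilon_i$, which is the claimed expression. Proposition~\ref{prop:allfullprivate} then follows by rerunning the same argument with the inner sum restricted to agent $i$'s actual wake-up set $\mathcal{T}_i$ (the scales $s_i(t)$ vanish for $t\notin\mathcal{T}_i$), the only change being that the normalizing constant becomes $\lambda_{\mathcal{T}_i}(i) = \sum_{t\in\mathcal{T}_i}\frac{\sqrt[3]{C}-1}{\sqrt[3]{C^{T}}-1}\sqrt[3]{C^{t}}$. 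I do not anticipate a genuine obstacle here: the two points worth a sentence of justification are that it is enough to optimize the additive term alone, and that the interior KKT point is the global optimum, both of which follow from strict convexity of the per-coordinate terms together with their barrier behaviour at the boundary of the feasible set.
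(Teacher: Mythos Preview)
Your proposal is correct and follows essentially the same route as the paper: reduce to minimizing the additive noise term of Theorem~\ref{thm:utility}, substitute $s_i(t)=2L_0/(\epsilon_i(t)m_i)$, exploit separability across agents, and solve the resulting one-dimensional constrained problem by first-order conditions and a geometric sum. The only cosmetic difference is that the paper eliminates the equality constraint by substituting $\epsilon_i(0)=\bar\epsilon_i-\sum_{t\geq 1}\epsilon_i(t)$ and then differentiates, whereas you use a Lagrange multiplier; your added remark that strict convexity and the barrier at the boundary guarantee the stationary point is the global minimizer is a welcome justification that the paper leaves implicit.
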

\begin{proof}
Denote $A_i = \frac{\mu c_i D_{ii} L_0}{m_i}$. 
As the first part of the upper-bound of the utility loss in Theorem~\ref{thm:utility} does not depend on the $\epsilon_i(t)$'s, we need to find the $\epsilon_i(t)$'s which minimize the following quantity:
\begin{align}
\min \sum_{t=0}^{T-1}\sum_{i=1}^n\Big( 1 - \frac{\sigma}{nL_{max}} \Big)^{t}\big(\mu c_i D_{ii}s_i(t)\big)^2 & = 
\min \sum_{t=0}^{T-1}\sum_{i=1}^n\Big( 1 - \frac{\sigma}{nL_{max}} \Big)^{t}\Big(\frac{2\mu c_i D_{ii} L_0}{\epsilon_i(t_i) m_i}\Big)^2 \nonumber\\ 
 & = \min \sum_{t=0}^{T-1} C^{t} \Big(\sum_{i = 1}^{n} \frac{A_i^2}{\epsilon_i^2(t)}\Big)\nonumber \\
 & = \min \sum_{i = 1}^{n} \Big(\sum_{t=0}^{T-1} \frac{A_i^2 C^{t}}{\epsilon_i^2(t)}\Big),\label{objective}
\end{align}
under the constraints: $\forall i, \epsilon_i(t) \geq 0$ and $\sum_{t=0}^{T-1} \epsilon_i(t) = \bar{\epsilon}_i$.
As the agents are independent, we can solve the above optimization problem separately for each agent $i$ to minimize $\sum_{t=0}^{T-1} \frac{A_i^2 C^{t}}{(\epsilon_i(t))^2}$ under the constraint $\sum_{t=0}^{T-1} \epsilon_i(t) = \bar{\epsilon}_i$. Denote $\epsilon_i(t) = x_{it}$. We have $x_{i0} = \bar{\epsilon}_i - \sum_{t = 1}^{T-1}x_{it}$. Replacing $x_{i0}$ in the objective function of Eq.~\ref{objective}, we can write the objective function as follows:
$$
\forall i: \; \; F_i(x_{i1}, \cdots, x_{iT-1}) = \frac{A_i^2}{(\bar{\epsilon}_i - \sum_{t = 1}^{T - 1}{x_{it}})^2} + \sum_{t = 1}^{T -1} {\frac{A_i^2 C^t}{x_{it}^2}}.
$$
The problem is thus equivalent to finding $\min F_i(x_{i1}, \cdots, x_{it-1}) \; \; \forall i$ under the previous constraints.
To find the optimal $x_{it}$ we find set its partial derivative $\frac{\partial F_i}{\partial x_{it}}$ to $0$:
$$
\frac{\partial F_i}{\partial x_{it}} = \frac{-2A_i^2 C^t}{x_{it}^3} + \frac{2 A_i^2}{(\bar{\epsilon}_i - \sum_{t = 1}^{T-1}x_{it})^3} = 0.
$$
Hence the value of $x_{it}$ satisfies:
\begin{align}
\label{each}
 \forall t \in [1:T-1]: x_{it} = \sqrt[3] {C^t} \Big(\bar{\epsilon}_i - \sum_{j = 1}^{T-1}x_{ij}\Big).
\end{align}
For a fixed $i$, after summing up all $x_{it}$s, we get:
$$
\sum_{j = 1}^{T-1}x_{ij} = \Big(\sum_{j = 1}^{T-1} \sqrt[3]{C^j}\Big)\Big(\bar{\epsilon}_i - \sum_{j = 1}^{T-1}x_{ij}\Big) = \frac{\sqrt[3]{C^T} - \sqrt[3]{C}}{\sqrt[3]{C} - 1}\Big(\bar{\epsilon}_i - \sum_{j = 1}^{T-1}x_{ij}\Big).
$$
And hence:
$$
x_{i0} = \bar{\epsilon}_i - \sum_{j = 1}^{T-1}x_{ij} = \frac{\sqrt[3]{C} - 1}{\sqrt[3]{C^T} - 1}\bar{\epsilon}_i.
$$
Using Eq.~\ref{each}, we thus finally get:
$$
\forall i,t: \epsilon_i^*(t) = \frac{\sqrt[3]{C} - 1}{\sqrt[3]{C^T} - 1} \sqrt[3]{C^t} \bar{\epsilon}_i
$$
\end{proof}

The noise allocation strategy given by the above lemma optimizes the utility loss, which is an expectation with respect to the clock ticks. Hence $\epsilon_i^*(t)$ gives the amount of noise that an agent should add \emph{in expectation} at a given global iteration $t$. Note that we have $\sum_{t=0}^{T-1}\epsilon_i^*(t) = \bar{\epsilon}_i$, hence the budget is matched exactly in expectation.
For a particular run of the algorithm however, each agent $i$ only wakes up at a subset $\mathcal{T}_i$ of all iterations and only uses its privacy budget at these iterations $t\in\mathcal{T}_i$. Thus, for a single run, the above strategy does not use up the entire privacy budget $\bar{\epsilon}_i$.

If we instead condition on a particular schedule, i.e. each agent $i$ knows $\mathcal{T}_i$ in advance, we can appropriately renormalize the privacy parameters to make sure that the agents truly utilize their entire privacy budget.
The overall privacy parameter for agent $i$ with optimal noise allocation as in Lemma~\ref{cor:allocation} given $\mathcal{T}_i$ is as follows:
$$
\sum_{t \in \mathcal{T}_i} \epsilon_i^*(t) = \lambda_{\mathcal{T}_i}(i) \cdot \bar{\epsilon}_i.
$$
Hence we can set
$$
\sum_{t \in \mathcal{T}_i} \epsilon_i(t) = \sum_{t \in \mathcal{T}_i} \frac{\epsilon_i^*(t)}{\lambda_{\mathcal{T}_i}(i)} =  \bar{\epsilon}_i.
$$
This shows that the privacy parameters defined in Proposition~\ref{prop:allfullprivate} fulfills the overall privacy budget of an agent $i$ during the iterations that agent $i$ wakes up.
Furthermore, for $\epsilon_i^*(t)$ defined in Lemma~\ref{cor:allocation}, $\sum_{t = 0}^{T - 1} \epsilon_i^*(t) = \bar{\epsilon}_i\quad\forall i\in\intset{n}$.
We then conclude that $\lambda_{\mathcal{T}_i}(i) \leq 1$ as $\mathcal{T}_i \subseteq \intset{T}$.
The additional noise to provide $\epsilon_i(t)$ differential privacy (as in Proposition~3) is thus lower than the additional noise to provide $\epsilon_i^*(t)$ differential privacy in Lemma~\ref{cor:allocation}. 

\section{Further Analysis of Theorem~\ref{thm:utility}}
\label{sec:utilityanalysis}

We further analyze the utility/privacy trade-off of Theorem~\ref{thm:utility} in the special case where the noise scales are uniform across iterations, i.e. $s_i(t)=s_i$ for all $t$ and $i$. Let us denote
$$a = \frac{1}{nL_{min}}\sum_{i=1}^n(\mu D_{ii}c_is_i)^2,\quad b = \costuni(\Theta(0)) - \costuni^*,\quad\rho = \frac{\sigma}{nL_{max}}.$$

The additive error due to the noise is a sum of a geometric series and thus simplifies to
$$\sum_{t=0}^{T-1}a(1-\rho)^t = a\left(\frac{1-(1-\rho)^T}{\rho}\right).$$

Hence we can write the inequality of Theorem~\ref{thm:utility} as
\begin{equation*}
\mathbb{E}\left[\costuni(\Theta(T)) - \costuni^\star\right] \leq \underbrace{b(1-\rho)^T}_{\text{optimization error}} + \underbrace{\frac{a}{\rho}\left(1-(1-\rho)^T\right)}_{\text{noise error}}.
\end{equation*}

The optimization error after $T$ iterations thus decomposes into two terms: an optimization error term (which is the same quantity that one gets in the non-private setting) and a noise error term due to privacy. One can see that if the algorithm is run until convergence ($T\rightarrow\infty$), the (additive) utility loss due to privacy is given by
$$\frac{a}{\rho}=\frac{L_{max}}{\sigma L_{min}}\sum_{i=1}^n(\mu D_{ii}c_is_i)^2.$$
However, this additive loss due to noise can actually be smaller than $\frac{a}{\rho}$ for $T<\infty$. Indeed, increasing $T$ from 0 to $\infty$ makes the noise error go from $0$ to $\frac{a}{\rho}$, but also drives the optimization error from $b$ to $0$. This suggests that the number of iterations $T$ can be used to minimize the overall error and should be carefully tuned. This is confirmed by our numerical experiments in Figures~\ref{fig:private:cdcst}-\ref{fig:private:cdmp}.

\section{Propagation of (Private) Local Models}
\label{sec:mp}

In this section, we give some details on an interesting special case of our framework. The idea is to smooth models that are pre-trained locally by each agent. Formally, the objective function to minimize is as follows:
\begin{equation}
    \costmp(\Theta) = \frac{1}{2} \bigg( \sum_{i<j}^n W_{ij} {\lVert \Theta_i - \Theta_j \rVert}^2 + \mu \sum_{i=1}^n D_{ii} c_i {\lVert \Theta_i - \Theta_i^{\loc} \rVert}^2 \bigg),
    \label{eq:Qmp}
\end{equation}
which is a special case of \eqref{eq:Qunified} when we set $\Loss_i(\Theta_i; \mathcal{S}_i) = \frac{1}{2} {\lVert \Theta_i - \Theta_i^{\loc} \rVert}^2$ with $\Theta_i^{\loc} \in\argmin_{\theta\in\mathbb{R}^p} \sum_{j=1}^{m_i} \loss(\theta; x_i^j,y_i^j)+\lambda_i\|\theta\|^2$.
Each $\Loss_i$ is $1$-strongly convex in $\Theta_i$ with $1$-Lipschitz continuous gradient, hence we have $L_i=D_{ii}(1+\mu c_i)$ and $\sigma\geq \mu\min_{1\leq i\leq n}[D_{ii}c_i]$.

Developing the update step \eqref{eq:cdupdate} for this special case, we get:
\begin{equation}
\label{eq:mpupdate}
\Theta_i(t+1) = \frac{1}{1+\mu c_i}\bigg( \sum_{j\in\Nei{i}}\frac{W_{ij}}{D_{ii}}\Theta_j(t) + \mu c_i \Theta_i^{\loc}\bigg).
\end{equation}
Because the objective function \eqref{eq:Qmp} is quadratic and separable along the $p$ dimensions, \eqref{eq:mpupdate} corresponds to the \emph{exact minimizer} of $\costmp$ along the block coordinate direction $\Theta_i$. Hence $\Theta_i(t+1)$ does not depend on $\Theta_i(t)$, but only on the solitary and neighboring models.

It turns out that we recover the update rule proposed specifically by \citet{Vanhaesebrouck2017a} for model propagation. Our block coordinate algorithm thus generalizes their approach to general local loss functions and allows to obtain convergence rate instead of only asymptotic convergence.

\paragraph{Private setting}
In the above objective, the interaction with each local dataset $\mathcal{S}_i$ is only through the pre-trained model $\Theta_i^{\loc}$ learned by minimizing the (regularized) empirical risk as denoted in \eqref{eq:solitary}. Therefore, if we generate a DP version $\widetilde{\Theta}_i^{\loc}$ of $\Theta_i^{\loc}$, we can run the non-private coordinate descent algorithm \eqref{eq:mpupdate} using $\widetilde{\Theta}_i^{\loc}$ instead of $\Theta_i^{\loc}$ without degrading the privacy guarantee. We can thus avoid the dependency on the number of iterations of the coordinate descent algorithm.
Several well-documented methods for an agent to generate a DP version of its local model $\Theta_i^{\loc}$ exist in the literature, under mild assumptions on the loss function and regularizer. One may for instance use output or objective perturbation \citep{regularized}.

Recall that Theorem~\ref{thm:utility} emphasizes that it is beneficial to have a good warm start point $\Theta(0)$ for the general algorithm: the smaller $\costuni(\Theta(0)) - \costuni^\star$, the less iterations needed to decrease the optimization error to the desired precision and hence the less noise added due to privacy. However, to ensure the overall privacy of the algorithm, $\Theta(0)$ must be also be differentially private. In light of the discussion above, we can use the private model propagation solution as a good private warm start. 


\section{Additional Experimental Results and Details}
\label{sec:addexp}

\subsection{Linear classification}

\begin{figure}[t]
    \centering
    \includegraphics[width=0.75\textwidth]{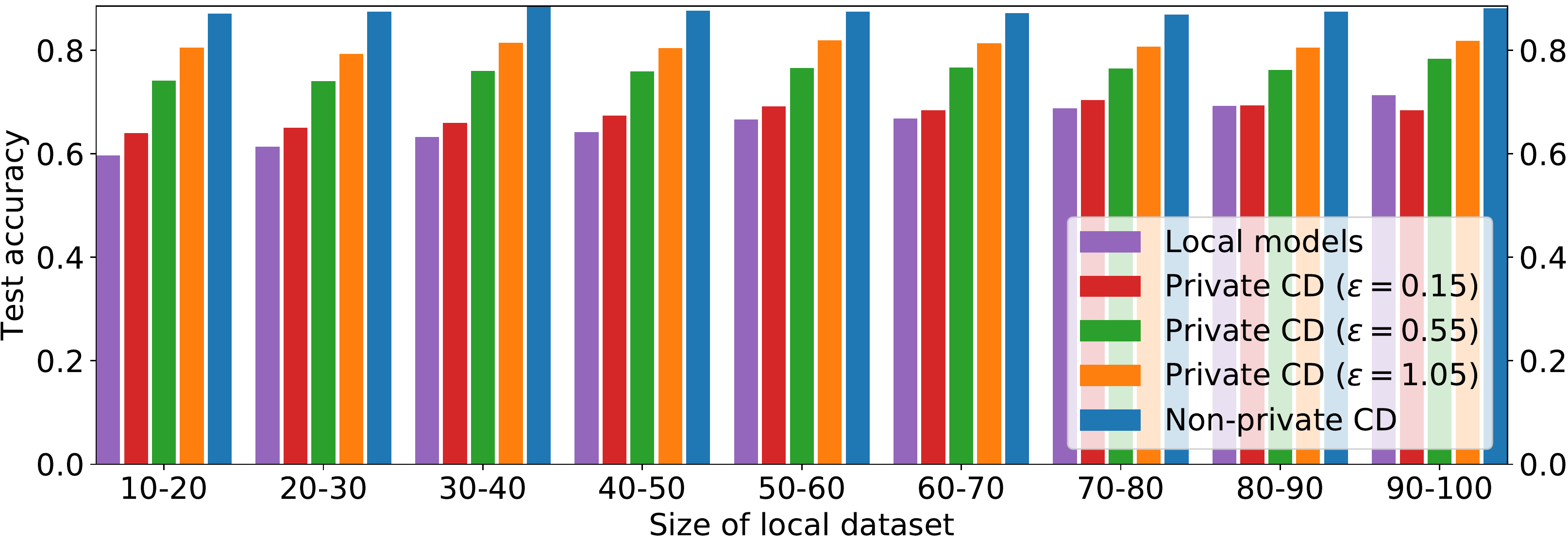}
    \caption{\label{fig:cd_hist} Final test accuracy (averaged over 5 runs) per local dataset size for dimension $p=100$ and several privacy regimes for our private and non-private coordinate descent algorithms. Agents with small datasets get larger improvements. Best seen in color.}
\end{figure}

We present here some additional experimental results on the linear classification task to complement those displayed in the main text.

\paragraph{Test accuracy w.r.t. size of local dataset}
Figure~\ref{fig:cd_hist} shows the test accuracy of our algorithm under different private regimes depending on the size of the local dataset of an agent.
First, we can see that all agents (irrespective of their dataset size) benefit from private collaborative learning, in the sense that their final accuracy is larger than that of their purely local model. This is important as it gives an incentive to all agents (including those with larger datasets) to collaborate.
Second, the algorithm effectively corrects for the imbalance in dataset size: agents with less data generally get a larger boost in accuracy and can almost catch up with better-endowed agents (which also get an improvement, albeit smaller).

\begin{figure}[t]
    \centering
    \includegraphics[width=0.55\textwidth]{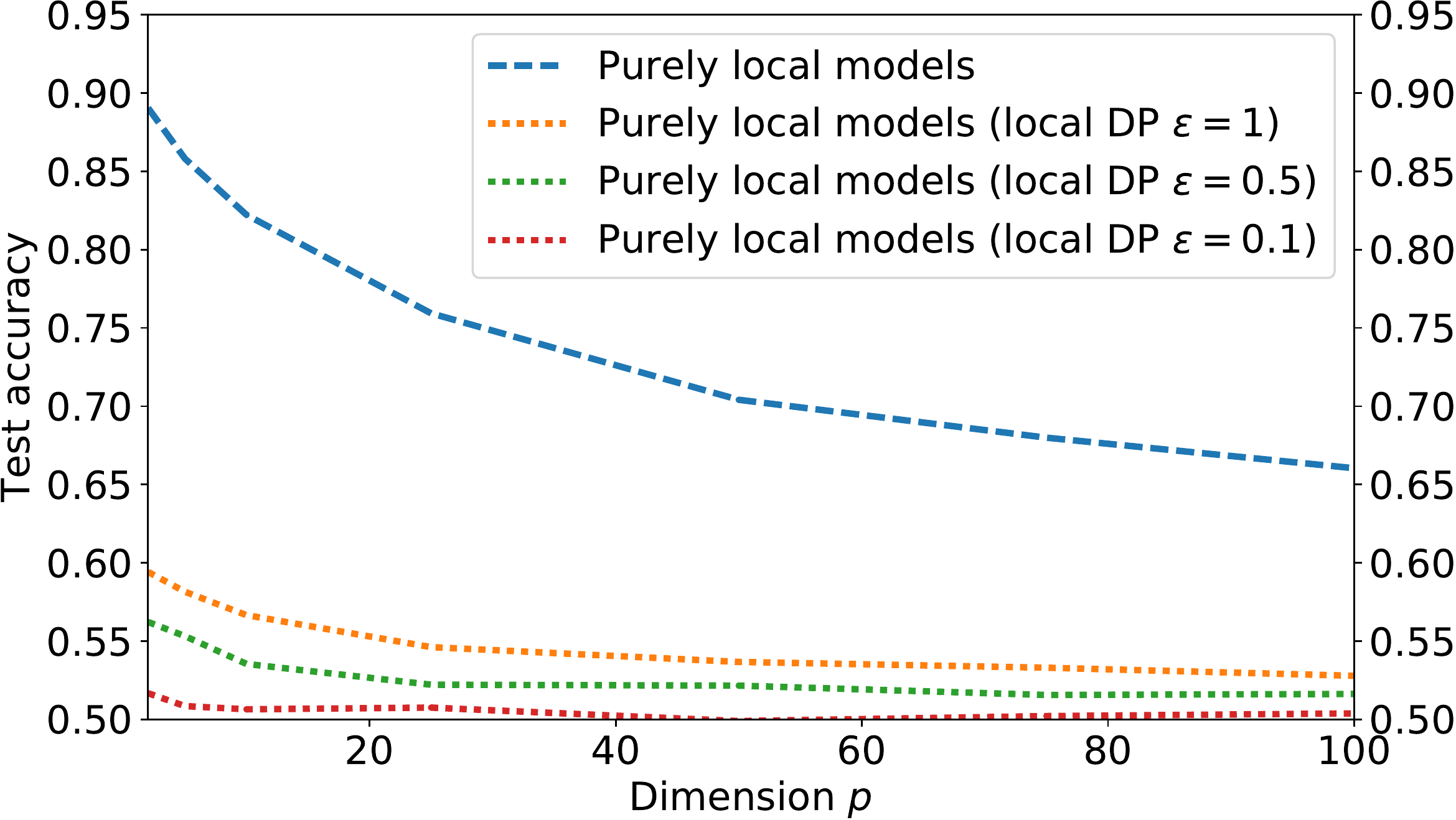}
    \caption{\label{fig:localdp} Test accuracy (averaged over 5 runs) of purely local models learned from perturbed data (local DP) w.r.t. to data dimension, for different privacy regimes. Best seen in color.}
\end{figure}

\paragraph{Local Differential Privacy baseline}
As mentioned in Section~\ref{sec:related}, local Differential Privacy \citep{Duchi2012b,Kairouz2016a} consists in adding noise to each data point itself (proportional to the sensitivity of its features) so that the resulting perturbed point does not reveal too much about the original point. Local DP can be used trivially in the distributed/decentralized setting as it is purely local. However it is also a very conservative approach as it is agnostic to the type of analysis done on the data (the perturbed dataset can be released publicly).
Figure~\ref{fig:localdp} shows the accuracy of purely local models learned after applying $(\epsilon, 0)$-local DP to each local dataset. As expected, the loss in accuracy compared to purely local models learned from unperturbed data is huge. We were not able to improve significantly over these models by applying our collaborative learning algorithm on the perturbed data, confirming that it contains mostly random noise. This confirms the relevance of our private algorithm based on perturbing the updates rather than the data itself.

\subsection{Recommendation Task}

We provide here some details on our experimental setup for MovieLens-100K:
\begin{itemize}
\item \textbf{Normalization}: Following standard practice, we normalize the data user-wise by subtracting from each rating the average rating of the associated user.
\item \textbf{Movie features}: In principle, for privacy reasons, the features should be computed independently of the training data (e.g., computed from a separate set of user ratings, or descriptive features obtained from IMDB). For convenience, we simply use the movie features generated by a classic alternating least square method for collaborative filtering \citep{Zhou2008a} applied to our training set (i.e., random 80\% of the ratings). We set the feature dimensionality $p$ to $20$ as it is typically enough to reach good performance on MovieLens-100K.
\item \textbf{Lipschitz constant}: One can bound the Lipschitz constant of the quadratic loss by assuming a bound on the norm of model parameters and movie features. However, this often results in overestimating the Lipschitz constant and hence in an unnecessarily large scale for the added noise used to enforce differential privacy. In such cases it is easier to clip the norm of all point-wise gradients when they exceed a constant $C$ \citep[see for instance][]{Abadi2016a}. This ensures that \eqref{eq:sens3} holds (replacing $L_0$ by $C$). We set $C=10$, which is large enough that gradients are almost never clipped in practice.
\item \textbf{Hyperparameters}: We simply set the L2 regularization parameter of each agent $i$ to $\lambda_i=1/m_i$, and use $\mu=0.04$ as in the previous synthetic experiment. For our private algorithm, the number of iterations per node is tuned for each value of $\bar{\epsilon}$ on a validation set (obtained by a 80/20 random split of the training set of each agent).
\end{itemize}

\end{document}